\documentclass[10pt,journal,compsoc]{IEEEtran}
\usepackage{amsmath, amsfonts, amssymb}
\usepackage{mathtools}
\usepackage{amsthm}
\usepackage{bm}
\usepackage{nicefrac}

\usepackage{algorithm}
\usepackage{algpseudocode}

\usepackage{array}
\usepackage{booktabs}
\usepackage{multirow}
\usepackage{makecell}
\usepackage{threeparttable}
\usepackage{diagbox}
\usepackage{hhline}
\usepackage{colortbl}

\usepackage{graphicx}
\usepackage{wrapfig}
\usepackage[caption=false,font=normalsize,labelfont=sf,textfont=sf]{subfig}

\usepackage{xcolor}
\usepackage{tcolorbox}

\usepackage{textcomp}
\usepackage{url}
\usepackage{verbatim}
\usepackage{xspace}
\usepackage{multicol}
\usepackage{stfloats}

\usepackage{listings}

\usepackage{pifont}

\usepackage{cite}

\newcommand{\ours}[0]{{{RTA}}\xspace}
\newcommand{\code}[0]{\url{https://github.com/EdisonLeeeee/RTA}\xspace}

\definecolor{commentcolor}{RGB}{110,154,155}   

\newcommand{\R}[1]{{%
    \textbf{%
        \ifstrequal{#1}{1}{\textcolor{red}{R#1}}{%
        \ifstrequal{#1}{2}{\textcolor{blue}{R#1}}{%
        \ifstrequal{#1}{3}{\textcolor{magenta}{R#1}}{%
        \ifstrequal{#1}{4}{\textcolor{teal}{R#1}}{%
                           \textcolor{cyan}{R#1}%
        }}}}%
    }%
}}
\definecolor{BlueGreen}{rgb}{0.0, 0.87, 0.87}
\definecolor{WildStrawberry}{rgb}{1.0, 0.26, 0.64}
\newtcbox{\hlprimarytab}{on line, box align=base, colback=BlueGreen!20, size=fbox, before upper=\strut, top=-1.5pt, bottom=-1.5pt, left=-1pt, right=-1pt, boxrule=0pt}
\newtcbox{\hlsecondarytab}{on line, box align=base, colback=WildStrawberry!10,size=fbox, before upper=\strut, top=-1.5pt, bottom=-1.5pt, left=-2pt, right=-2pt, boxrule=0pt}

\definecolor{gold}{HTML}{FFD700}
\definecolor{brown}{HTML}{C0C0C0}
\definecolor{tableheadcolor}{RGB}{255, 255, 255}
\definecolor{tablerowcolor}{RGB}{255, 255, 255}
\definecolor{methodrowcolor}{RGB}{255, 255, 255}

\newcommand{\first}[1]{{\textbf{ {#1}}}} 
\newcommand{\second}[1]{{{\underline{#1}}}}

\newtheorem{theorem}{Theorem}

\newtheorem{assumption}{Assumption}
\newtheorem{remark}{Remark}

\usepackage[capitalize,noabbrev]{cleveref}

\begin{document}

\title{Rethinking Message Passing as Retrieval for Text-Attributed Graph Learning}


\author{
Jintang~Li,
Yuhong~Chen,
Ruofan~Wu,
Binli~Luo,
Jiayi~Ji,
Hui~Li,
and Rongrong~Ji
\IEEEcompsocitemizethanks{
\IEEEcompsocthanksitem Jintang Li, Yuhong Chen, Jiayi Ji, Hui Li, Rongrong Ji are with the Key Laboratory of Multimedia Trusted Perception and Efficient Computing, Ministry of Education of China, Xiamen University,
Xiamen 361005, China (E-mail: edisonlee@xmu.edu.cn, yuhongchen@stu.xmu.edu.cn, jjyxmu@gmail.com, hui@xmu.edu.cn, rrji@xmu.edu.cn).
\IEEEcompsocthanksitem Ruofan Wu is with the School of Statistics of Fudan University, Shanghai 200433, China (E-mail: wuruofan1989@gmail.com).
\IEEEcompsocthanksitem Binli Luo is with the School of Computer Science and Engineering of Central South University, Changsha 410000, China (E-mail: lblhandsome@gmail.com).
}
}

\markboth{Journal of \LaTeX\ Class Files,~Vol.~14, No.~8, August~2021}%
{Shell \MakeLowercase{\textit{et al.}}: A Sample Article Using IEEEtran.cls for IEEE Journals}


\IEEEtitleabstractindextext{%
  \begin{abstract}
    Graph neural networks (GNNs) are typically conceptualized as message-passing neural networks, yet it remains unclear why neighborhood aggregation reliably outperforms node-wise multilayer perceptrons (MLPs). 
    Despite its empirical success, this paradigm can be computationally expensive and sensitive to imperfect graph structures. In this work, we present a retrieval-augmented view of GNNs: each layer makes predictions by applying an MLP to a node representation together with a permutation-invariant summary of retrieved graph context. Motivated by this perspective, we propose \ours, a simple MLP-based framework that replaces structural message passing with label-aware retrieval and propagation. We provide theoretical insights that (i) connect retrieval-based aggregation to softmax-attention message passing, and (ii) establish the robustness of retrieved-context supervision to mis-retrieved outliers. Experiments on multiple text-attributed graph benchmarks show that \ours matches or even outperforms strong GNN and graph LLM baselines while improving efficiency and robustness across diverse scenarios.
  \end{abstract}

  \begin{IEEEkeywords}
    Graph Neural Networks, Retrieval-Augmented Generation, Message Passing Neural Networks
  \end{IEEEkeywords}
}

\maketitle
\IEEEdisplaynontitleabstractindextext
\IEEEpeerreviewmaketitle

\IEEEraisesectionheading{\section{Introduction}\label{sec:introduction}}

Graph neural networks (GNNs) have attracted substantial attention due to their strong empirical performance on graph-structured data~\cite{gcn,gat,graphsage,maskgae}, with applications ranging from social networks~\cite{Fan0LHZTY19} and molecular chemistry~\cite{SunDY22} to recommendation systems~\cite{lightgcn}. The dominant paradigm behind modern GNNs is \emph{message passing}~\cite{message_passing}: nodes iteratively exchange information along edges and update their representations by aggregating neighbor messages through permutation-invariant operators (e.g., sum or mean). While this mechanism is effective in many settings, it also introduces well-known challenges, including limited scalability, sensitivity to graph structure, degraded performance on heterophilous graphs~\cite{hetero2net}, and representation collapse due to oversmoothing and oversquashing in deep propagation~\cite{bottleneck}. More fundamentally, although message passing has become the dominant paradigm for GNNs, it remains unclear whether its effectiveness stems from graph-based propagation itself or, more generally, from conditioning node predictions on informative contexts.

In parallel, retrieval-augmented generation (RAG) has emerged as a powerful paradigm in natural language understanding~\cite{rag_survey}. RAG~\cite{rag} follows a \emph{retrieve-then-reason} procedure: given a query, it retrieves a small set of relevant contexts and conditions a parametric model (e.g., an LLM) on the retrieved evidence to produce an output. This mechanism reduces reliance on parametric memory and improves robustness by grounding predictions in external information.
\begin{figure}[t]
  \centering
  \includegraphics[width=\linewidth]{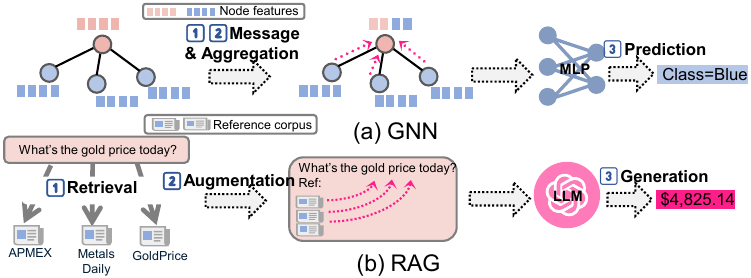}
  \caption{Technical comparison between {GNN} and {RAG}. Both leverage contextual information beyond the raw input.}
  \label{fig:example}
  \vspace{-5mm}
\end{figure}

Despite their different application domains, GNNs and RAG share a common computational pattern: both condition predictions on \emph{contextual information} beyond the raw input (see Figure~\ref{fig:example}). In a message-passing layer, the context is defined by the node's neighborhood, whereas in RAG it consists of a set of retrieved documents or passages. However, this shared computational structure has not been explicitly characterized from a unified perspective, leaving unclear how context selection, aggregation, and prediction in message passing relate to retrieval-augmented inference.

In this work, we develop a retrieval-augmented view of a broad family of message-passing GNNs:
\begin{center}
  \textit{A message-passing layer can be interpreted as an MLP operating on a node and a retrieved context set.}
\end{center}
Concretely, we show that the message passing step of GNNs can be cast as a retrieval and augmentation module, and that different GNN variants correspond to different choices of \emph{retriever} (which contexts are selected) and \emph{set function} (how contexts are aggregated). This perspective clarifies why conditioning on neighbors can generally outperform node-independent models: the model is effectively performing prediction with retrieved evidence, where the evidence is provided by graph-induced or learned retrieval sets.

Building on this connection, we propose \textbf{\ours} (\textbf{\underline{R}}etrieve-\textbf{\underline{T}}hen-\textbf{\underline{A}}ggregate), a simple retrieval-augmented framework for text-attributed graph representation learning that avoids explicit multi-hop message passing. Crucially, \ours supports label-aware retrieval, where labeled nodes provide necessary supervision signals that guide both retrieval and aggregation without information leakage. \ours proceeds in three stages: (i) a label-aware contextual encoder produces node embeddings by augmenting each node's text with a small set of structural neighbor texts; (ii) a top-$k$ retriever in the structure and embedding space constructs a \emph{retrieval graph} that can connect semantically related nodes beyond local edges; and (iii) an MLP-style update aggregates retrieved neighbors to produce task-specific representations. By replacing recursive message passing on the original graph with localized retrieval-driven interactions, \ours achieves better performance while improving scalability and robustness across diverse settings.


Theoretically, we establish two results that characterize retrieval-based message passing and directly align with our method design. First, we show that top-$k$ retrieval followed by aggregation provides a sparse approximation of softmax-attention message passing, where the approximation error is controlled by the retrieval margin and the score distribution within the retrieved context set. Second, we prove that label-aware retrieved-context supervision reduces the gradient influence of mis-retrieved outliers, providing a theoretical explanation for the robustness of retrieval-based aggregation under noisy contexts. Overall, both results justify retrieval as a principled mechanism for constructing task-relevant context sets and provide theoretical support for label-guided retrieval and aggregation.

The main contributions of our work are as follows:
\begin{itemize}
  \item \textbf{Formulation.} We reveal an intrinsic duality between message-passing aggregation and retrieval-augmented MLPs, offering a unified view of GNN layers as prediction over retrieved contexts.
  \item \textbf{Framework.} Leveraging this connection, we propose \ours, a scalable retrieval-centric alternative to message passing that integrates label-aware contextual encoding, embedding and structural space retrieval, and label-enhanced aggregation.  
  \item \textbf{Theoretical results.} We establish theoretical connections between retrieval-based aggregation and message passing, and prove optimization robustness under retrieved-set supervision.
  We present theoretical guarantees for (i) the equivalence of retrieval-based message passing and (ii) optimization robustness under retrieved-set supervision.
  \item \textbf{Empirical results.} We conduct extensive experiments on multiple text-attributed graph benchmarks, showing that \ours matches or outperforms strong GNN baselines with improved efficiency and robustness.
\end{itemize}
Overall, our work takes a step toward demystifying message passing by recasting it as retrieval over contextual evidence, and it suggests a practical design space in which graph representation learning can be carried out by retrieval-defined computation graphs rather than fixed-edge diffusion.

\section{Related work}
Our work connects three lines of research: message-passing graph neural networks, learning over text-attributed graphs with language models, and retrieval-augmented generation. 
While these areas have been extensively studied in isolation, their underlying computational connection remains underexplored. 
We review them below and highlight how our work differs by interpreting graph representation learning itself as a retrieval-augmented prediction process.

\subsection{Graph neural networks}
GNNs are a flexible class of models for learning representations over graph-structured data, which represent a significant leap forward in the ability to model and analyze complex relationships within data. By iteratively aggregating and transforming information from a node's local neighborhood, GNNs learn powerful, context-aware representations that are crucial for making accurate predictions and uncovering hidden patterns within the graphs \cite{gcn,gat,graphsage}.
Beyond conventional local message passing, graph Transformers have emerged as an alternative paradigm for capturing long-range dependencies through attention-based interactions. Representative methods include NodeFormer~\cite{nodeformer}, which develops a scalable all-pair attention mechanism for graph representation learning, and DGT~\cite{DGT}, which dynamically selects structurally and semantically relevant nodes to enable efficient sparse attention.
The research topic of GNNs is vibrant and rapidly evolving, encompassing a wide range of theoretical advancements and algorithmic innovations~\cite{graphsaint,mediangcn,gin,maskgae}.

\subsection{Learning over text-attributed graphs}
In real-world graph tasks, nodes often have textual attributes that convey richer information, resulting in text-attributed graphs (TAGs)~\cite{tape}.
TAG provides a scenario for applying LLMs on graphs that store information in natural language. Combining GNNs with LLMs can leverage the robust textual understanding of LLMs while harnessing GNNs' ability to capture structural relationships. Specifically, there are several lines of research according to the role of LLMs on graphs, including LLM as Predictor~\cite{graphgpt,ChenMLJWWWYFLT23,bundle}, LLM as Aligner~\cite{relm,glem,wang2025generalization} and LLM as Enhancer~\cite{tape,giant,ofa}. While adapting LLMs to graphs has emerged as a research focus, little effort has been dedicated to uncovering the correlations between GNNs and language modeling.
A closely related work is AuGLM, which treats retrieval as input augmentation for an LLM classifier, whereas \ours studies retrieval as the context-selection primitive underlying graph message passing.

\subsection{Retrieval-augmented generation}
Retrieval-augmented generation~\cite{rag} is an established approach to enhance the generation quality of LLMs by leveraging external knowledge bases.
RAG achieves good performance on various tasks such as language modeling~\cite{speculative_rag}  and question-answering~\cite{IzacardLLHPSDJRG23}, without additionally fine-tuning LLMs.
To improve generation, several studies have been dedicated to enhancing the capability of RAGs with correction~\cite{abs-2401-15884}, verification~\cite{LLatrieval}, or critique~\cite{selfrag}; see~\cite{rag_survey} for a comprehensive survey on RAG techniques.
Recently, researchers and practitioners have started to explore the role of graphs in facilitating RAG systems, with the most promising works being GraphRAG~\cite{graphrag} and its variants~\cite{gretriever,hipporag,hipporag2}.
GraphRAG-based approaches leverage knowledge graphs as a structured source of context and factual grounding for LLMs. By supplying entity-centric information such as textual descriptions, they provide richer, more organized evidence, which can encourage deeper, relation-aware reasoning by the LLM.
Note that GraphRAG-based approaches and our work represent two orthogonal lines of research. GraphRAG-based approaches enhance LLM reasoning by incorporating graph-structured knowledge into RAG. In contrast, \ours advances graph representation learning by leveraging recent advances in RAG.

\section{Preliminary}
We first introduce the notations and background needed to connect message-passing GNNs with retrieval-augmented generation. 
The goal of this section is not to review all variants, but to expose the shared computational pattern behind the two paradigms.
\subsection{Notations}
With graph-structured inputs, we typically assume a graph $\mathcal{G} = (\mathcal{V}, \mathcal{E})$, where $\mathcal{E} \subseteq \mathcal{V} \times \mathcal{V}$ specifies pairs of nodes in $\mathcal{V}$ that are connected. To unleash the power of LLMs on graphs, we consider a more challenging scenario, i.e., text-attributed graphs, which provide rich semantic information for the representation learning.
In the context of text-attributed graphs, we are given a set of textual features $\mathcal{T} = \{t_1, t_2, \ldots, t_N\}$, where each $t_u$ is the textual corpus associated with node $u \in \mathcal{V}$. In such scenarios, the text set $\mathcal{T}$ is encoded as the node feature matrix $\mathbf{X} = \{\mathbf{x}_1, \ldots, \mathbf{x}_N\}$.

\subsection{Message-passing graph neural networks}
Given a graph $\mathcal{G}$ and $d$-dimensional node features $\mathbf{X}$, message-passing GNNs aim to obtain informative node representations capturing both structural properties and connections between node features.
\begin{equation}
  \label{eq:message_passing}
  \mathbf{x}_u^{\prime}=\phi\left(\mathbf{x}_u, \bigoplus   \left(\{\mathbf{x}_v: {v \in \mathcal{N}_u}\}\right)\right),\  \mathcal{N}_u=\{v: (u,v)\in \mathcal{E}\},
\end{equation}
with the core components including a permutation invariant aggregation function $\bigoplus$ (e.g., mean or sum) and a combine or update function $\phi$; $\mathcal{N}_u \subseteq \mathcal{V}$ is a set of nodes adjacent to node $u$.
Specifically, node $u$ aggregates the messages along incoming edges using the aggregation function $\bigoplus$. Then, the update function $\phi$ updates the representations of the receiver node $u$ based on its current representation ($\mathbf{x}_u$) and the aggregated messages.
The update function $\phi$ allows the node to learn how to integrate its own features with the aggregated context effectively.
Typically, $\phi$ is an MLP network and the design of the function $\phi$ is what mostly distinguishes one type of GNN from the other.
For example, GraphSAGE~\cite{graphsage} implements GNN as:
\begin{equation}
  \label{eq:graphsage}
  \mathbf{x}_u^{\prime}=\operatorname{MLP}\left(\mathbf{x}_u, \operatorname{MEAN}   \left(\{\mathbf{x}_v: {v \in \mathcal{N}_u}\}\right)\right).
\end{equation}
In recent years, GNNs have been extended with several advanced variants that incorporate additional components, such as attention mechanisms~\cite{gat}, advanced aggregation functions~\cite{mediangcn}, and residual connections~\cite{revgat}.

\subsection{Retrieval-augmented generation}
RAG is a natural language querying approach for enhancing off-the-shelf LLMs with external knowledge~\cite{rag}. In an RAG application, we are given a collection of documents or contexts (e.g. Wikipedia), which provides the grounded knowledge and serves as the external knowledge base. Given a user query $\mathbf{q}$ and a corpus $\mathbf{C}$, a dense embedding based retriever~\cite{DPR} first retrieves top-$k$ contexts $\mathbf{C_k}=\{\mathbf{c}_1,\ldots,\mathbf{c}_k\}$ that are most relevant to the query $\mathbf{q}$. Here $\mathcal{R}_k\subseteq \mathbf{C}$ is a set of retrieved contexts from $\textbf{C}$. Then, LLM reads the top-$k$ contexts along with the query $\mathbf{q}$  to generate the answer $\mathbf{r}$.
Formally, a standard RAG-based system can be formulated as follows:
\begin{equation}
  \begin{aligned}
    \mathbf{r}    & =\operatorname{Reason}\left(\mathbf{q},\mathcal{R}_k\right),                                                \\
    \mathcal{R}_k & =\left\{\mathbf{c}_{{1}}, \cdots, \mathbf{c}_{{K}}\right\}=\operatorname{Retrieval}(\mathbf{q} \mid \mathbf{C}).
  \end{aligned}
\end{equation}
In practice, advanced RAG may incorporate chunking and reranking operations, which segment the contexts into chunks for improved retrieval and rerank the retrieved results for enhanced generation~\cite{flashrag}. In this work, we consider a classical RAG system that comprises only a retriever and a generator, for simplicity.

\section{Present Work: \ours}
\label{sec:method}
In this section, we first present our perspective on how GNNs relate to the RAG-based framework. Next, we introduce \ours, an RAG-inspired graph learning framework designed to learn representations on text-attributed graphs using a pure RAG scheme. \ours leverages valuable insights from GNNs and label propagation~\cite{lp}, a label-specific variant of message-passing GNNs, to efficiently and effectively capture graph context. 

\begin{figure}[t]
  \centering
  \includegraphics[width=\linewidth]{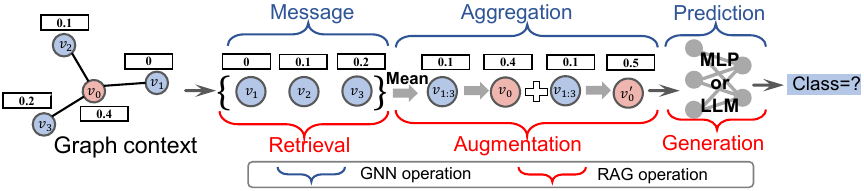}
  \caption{Illustrative example on how GNNs are decomposed into RAG-like architectures.}
  \label{fig:gnn_as_rag}
\end{figure}

\subsection{Connections between GNN and RAG}
\subsubsection{Unified view}
GNNs and RAG are important tools for graph structure learning and natural language understanding, respectively. Both approaches leverage similar principles by incorporating \textit{additional context} beyond the raw input to enhance comprehension. In the case of GNNs, the additional context is derived from aggregating information from neighboring nodes and the graph structure itself. For RAGs, context is incorporated by retrieving relevant knowledge from external sources to supplement the input. This shared principle motivates us to align them within a unified formulation.
In this regard, we formulate {GNNs} and {RAG} that follow similar rules as below:
\begin{align}
  \label{eq:gnn2}
   & \mathbf{x}_u^{\prime}=\underbrace{{\phi}\underbrace{\left(\mathbf{x}_u, \bigoplus (\{{\mathbf{x}_v : v \in \mathcal{N}_u}\})\right)}_\text{Aggregation}}_\text{Prediction},\ \underbrace{\mathcal{N}_u=\{v: (u,v)\in \mathcal{E}\}}_\text{Message},                                                 \\
  \label{eq:rag}
   & {\mathbf{r}=\underbrace{{\phi}\underbrace{\left(\mathbf{q}, \bigoplus (\{\mathbf{c}_j: {\mathbf{c}_j \in \mathcal{R}_k}\})\right)}_\text{Augmentation}}_\text{Generation},\ \underbrace{\mathcal{R}_k(\mathbf{q})=\mathrm{Top}\text{-}k_{\mathbf{c}\in\mathcal{R}}\ \langle\mathbf{q},\mathbf{c}\rangle}_{\text{Retrieval}}},
\end{align}
where the cosine similarity is defined as $\langle x,y\rangle=\frac{x\cdot y}{\|x\|\|y\|}$. Here, we omit the text embedding step for the sake of simplicity.
In RAG, $\mathcal{R}_k(\mathbf{q})$ denotes the top-$k$ retrieved corpus from $\mathcal{R}$ that are most relevant to the query $\mathbf{q}$ according to a dense retriever $\psi$, and $\bigoplus$ is a combination operator over the retrieved contexts, such as concatenation. We overload notations in Eq.~\eqref{eq:rag} for simplicity and omit prompt details.

\subsubsection{Label propagation as label-centric RAG}
This unified view suggests that the ``context'' set in Eq.~\eqref{eq:gnn2}--\eqref{eq:rag} need not be restricted to feature vectors or text tokens; it can also contain \emph{label evidence} from a labeled set (e.g., human supervision).
In particular, classical label propagation (LP)~\cite{lp} can be interpreted as a label-specific message-passing procedure:
each node \emph{collects} (retrieves) label signals from its neighbors and \emph{mixes} (augments) them through a propagation operator. Under the unified view above, LP corresponds to a special case where the retrieval set is the graph neighborhood $\mathcal{N}_u$, the retrieved ``contexts'' are label message vectors, and the ``generation'' step reduces to a simple readout on the aggregated label context:
\begin{equation}
\begin{aligned}
\label{eq:lp}
 & \mathbf{p}_u^\prime
 =\underbrace{\phi\underbrace{\left(\mathbf{p}_u,\bigoplus\big(\{\mathbf{p}_v: v\in\mathcal{N}_u\}\big)\right)}_{\text{Propagation}}}_{\text{Inference}},
 \
 \underbrace{\mathcal{N}_u=\{v:(u,v)\in\mathcal{E}\}}_{\text{Message / Retrieval}},\\
 & \mathbf{p}_v=\mathbb{I}[v\in\mathcal{T}]\,\omega(\tilde y_v)+\mathbb{I}[v\notin\mathcal{T}]\,\mathbf{0},
 \
 \hat y_u=\arg\max_{c\in[C]}[\mathbf{p}_u^\prime]_c,
\end{aligned}
\end{equation}
where $\mathcal{T}$ denotes the labeled nodes (training set), $\omega(\tilde y_v)$ can be a one-hot label embedding, and unlabeled nodes contribute a padding message (e.g., $\mathbf{0}$). This perspective provides a principled motivation for \emph{label-aware RAG on graphs}: when partial labels are available, they can be treated as additional retrieved context to guide both aggregation and retrieval.

\subsubsection{Differences and design implications}
The unified formulation in Eqs.~\eqref{eq:gnn2}, \eqref{eq:rag} and \eqref{eq:lp} highlights a shared \emph{retrieve-and-aggregate} skeleton, but the three paradigms differ in how the context set is formed and how it is fused.
(i) \textbf{context selection}: in GNNs/LP, the graph structure fixes the retrieved set $\mathcal{N}_u$, yielding query-dependent and degree-varying neighborhoods; in RAG, a learned retriever returns a fixed-size top-$k$ set $\mathcal{R}_k(\mathbf{q})$, which effectively induces a content-dependent similarity graph with controlled out-degree.
(ii) \textbf{context fusion}: RAG typically augments inputs via prompt construction (e.g., concatenating retrieved texts),
whereas GNNs employ explicit permutation-invariant set operators (mean/sum/max/attention) over vector states; LP further
exposes a label-only extreme where the fused context consists purely of propagated supervision signals.
These differences suggest a natural hybrid for graphs: learn the context set via embedding and structural space top-$k$ retrieval while retaining efficient set aggregation, and treat available labels as additional retrieved context.
We next instantiate this idea by introducing label embeddings into both contextual encoding and retrieved-neighbor aggregation.

\begin{figure*}[t]
  \centering
  \includegraphics[width=0.82\linewidth]{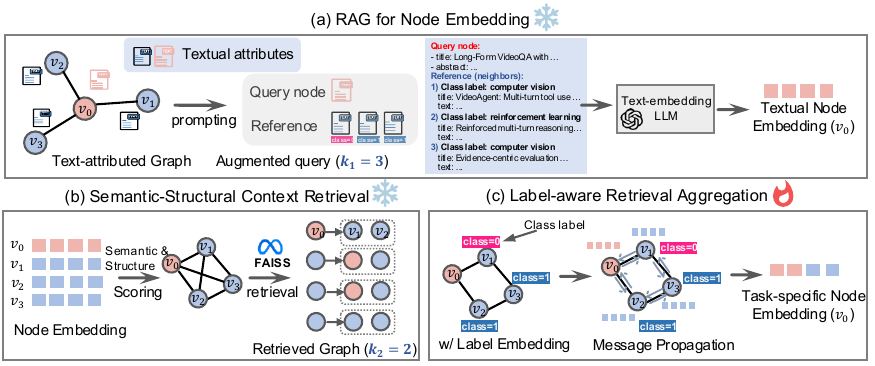}
  \caption{
    Overall framework of \ours. \ours recasts graph representation learning as a retrieval-augmented pipeline that avoids explicit multi-hop message passing over the original graph. Specifically, \textbf{(a)} for a query node $v_0$, we augment its text with a small set of structural neighbors (size $k_1$) and corresponding labels as the resulting prompt to obtain the contextual node embedding $\mathbf{x}_{v_0}$. Then, \textbf{(b)} based on the learned embeddings and the original graph structure, we construct a retrieval-defined context graph by jointly considering semantic similarity and structural proximity. Specifically, semantic retrieval is performed in the embedding space (e.g., via FAISS), while structural retrieval exploits graph diffusion information. Finally, \textbf{(c)} we aggregate features and available label embeddings from retrieved contexts and apply an MLP-style update to produce task-specific node representations for prediction.
}
  \label{fig:framework}
\end{figure*}

\subsection{\ours: RAG-based graph learning framework}

By drawing parallels between GNN and RAG, we now explore how integrating retrieval-augmented mechanisms into a simple MLP network can efficiently reshape graph representation learning. The overall framework of \ours is shown in Figure~\ref{fig:framework}. Different from conventional message-passing GNNs that rely on static edges, \ours retrieves and aggregates semantically relevant contexts from the embedding and structural space, enabling adaptive graph construction and scalable structure learning.

\subsubsection{Label-aware retrieval-augmented node embedding}
Existing learning paradigms on TAGs typically require a text encoder, such as BERT~\cite{bert} or DeBERTa~\cite{deberta}, to obtain node feature embeddings from associated text attributes. For example, a language model $\operatorname{LLM}(\cdot)$ processes the text attribute $t_u$ to generate $\mathbf{x}_u=\operatorname{LLM}(t_u)$. However, this operation treats each node in isolation and ignores the rich contextual information residing in neighboring textual attributes, leading to less informative embeddings. 

Inspired by the retrieve-then-reason paradigm in RAG, we enrich the textual representation of each node by augmenting it with contextual text retrieved from its structural neighbors. Since the graph itself provides a localized retrieval source, we can bypass explicit document retrieval and instead treat the neighbor texts as intrinsic contexts:
\begin{equation}
  \mathbf{x}_u = \operatorname{Enc}_\theta\big(t_u,\bigoplus\big(\{t_v: v \in \mathcal{N}_u^{(k_1)}\}\big)\big)\in\mathbb{R}^d,
\end{equation}
where $\mathcal{N}_u^{(k_1)}$ denotes the $k_1$ sampled neighbors of node $u$, and $\bigoplus$ represents a permutation-invariant combination such as prompt concatenation for the neighbor contexts.  Generally, $\operatorname{Enc}_\theta$ is implemented as an LLM backbone for semantic textual embedding.
To further enhance retrieval quality, we incorporate partial label signals into both the embedding and aggregation stages. 
When some node labels are available, we may inject them as additional context, which act as supervision-aware anchors that guide the retrieval process toward semantically and label-consistent neighborhoods:
\begin{equation}
\mathbf{x}_u = \mathrm{Enc}_\theta\!\Big(t_u,\ \bigoplus\big(\{(t_v,y_v): v\in \mathcal{N}_u^{(k_1)}\}\big)\Big),
\label{eq:enc_label}
\end{equation}
where $y_v$ denotes the label of node $v$.  
The introduction of label information is motivated by the label propagation~\cite{lp} algorithm to bring supervision to the message-passing (retrieval) step.
This retrieval-augmented encoding stage allows the LLM to generate context-aware node embeddings that better reflect both semantic and structural correlations.  
Empirically, such contextual augmentation yields richer semantic alignment across neighboring nodes and provides a stronger basis for subsequent structure learning.


\subsubsection{Retrieval-augmented aggregation}
Once the contextual embeddings $\{\mathbf{x}_u\}_{u\in V}$ are obtained, \ours performs graph structure learning by constructing a retrieval-defined context graph.  
Rather than propagating messages along pre-defined edges, we rebuild the graph topology through a retrieval process that identifies task-relevant nodes from both semantic and structural perspectives.  
For each node $u$, we retrieve a set of relevant contexts according to a retrieval score function:
\begin{equation}
\mathcal{R}_u^{(k_2)} 
= 
\mathrm{Top}\text{-}k_{2{v\neq u}} S(u,v),
\label{eq:retrieval}
\end{equation}
where $S(u,v)$ denotes the relevance score between nodes $u$ and $v$. 
Specifically, we consider two complementary retrieval signals. 
The semantic retrieval score is calculated based on the contextual embeddings:
\begin{equation}
S_{\rm sem}(u,v)
=
\frac{\mathbf{x}_u^\top \mathbf{x}_v}
{\|\mathbf{x}_u\|_2\|\mathbf{x}_v\|_2},
\end{equation}
which identifies nodes with similar semantic meanings and can be efficiently implemented using approximate nearest neighbor search, such as FAISS~\cite{faiss}. 
In addition to semantic similarity, we introduce a structural retrieval score to preserve the topological information of the original graph. 
Specifically, we employ personalized PageRank (PPR)~\cite{ppr} to measure the diffusion-based proximity between nodes. 
For a query node $u$, its PPR vector $\boldsymbol{\pi}_u$ is defined as:
\begin{equation}
\boldsymbol{\pi}_u
=
\alpha \mathbf{e}_u
+
(1-\alpha)\mathbf{P}^{\top}\boldsymbol{\pi}_u,
\label{eq:ppr}
\end{equation}
where $\mathbf{P}$ is the transition matrix of the original graph, $\mathbf{e}_u$ is the one-hot personalization vector centered at node $u$, and $\alpha$ is the teleport probability. 
The structural relevance between nodes $u$ and $v$ is then defined as:
\begin{equation}
S_{\rm str}(u,v)=\pi_u(v),
\label{eq:ppr_score}
\end{equation}
where $\pi_u(v)$ denotes the probability mass assigned to node $v$ in the personalized PageRank vector of node $u$.
Since semantic similarity and PPR scores may lie on different numerical scales, we apply query-wise min-max normalization to make the two retrieval signals comparable:
\begin{equation}
\small
\widetilde{S}_{r}(u,v)
=
\frac{
S_{r}(u,v)-\min_{j\neq u}S_{r}(u,j)
}{
\max_{j\neq u}S_{r}(u,j)-\min_{j\neq u}S_{r}(u,j)+\epsilon
},
r\in\{\mathrm{sem},\mathrm{str}\},
\label{eq:score_norm}
\end{equation}
where $\epsilon$ is a small constant for numerical stability.
The final retrieval score combines the normalized semantic and structural evidence:
\begin{equation}
S(u,v)
=
\lambda \widetilde{S}_{\rm sem}(u,v)
+
(1-\lambda)\widetilde{S}_{\rm str}(u,v),
\label{eq:hybrid_score}
\end{equation}
where $\lambda\in[0,1]$ controls the balance between semantic and structural retrieval. 
Since the fused score is used only for top-$k$ ranking, we set $\lambda=0.5$ by default to equally balance the two retrieval signals without dataset-specific tuning.
Optionally, we further symmetrize the retrieved edges by mutual-$k$NN to ensure a stable, undirected retrieval graph.  This process adaptively redefines connectivity according to both semantic relevance and structural proximity, resulting in a retrieval-defined graph that better reflects task-relevant relations.  As illustrated in Figure~\ref{fig:framework}(b), the retrieval-aggregation process enables the model to perform structure learning in a data-driven and scalable manner.

\subsubsection{Label-aware retrieval aggregation}
With the dynamically constructed similarity graph, message passing can be performed in a simplified yet expressive form. 
Instead of iterating over fixed edges, each node aggregates information from its retrieved contexts:
\begin{align}
\mathbf{x}_u^{\prime}=\operatorname{MLP}\left(\mathbf{x}_u, \bigoplus\left(\{\mathbf{x}_v: v \in \mathcal{R}_u^{(k_2)}\} \right)\right),
\end{align}
This design can be viewed as an MLP operating over a retrieval-defined neighborhood.  
It eliminates the need for explicit graph convolutions and avoids common issues such as oversmoothing and computational overhead caused by multi-hop message passing.  
Moreover, the retrieval mechanism allows efficient graph updates through approximate nearest neighbor (ANN) search tools such as FAISS~\cite{faiss}, ensuring scalability to large text-attributed graphs.
We embed labels via $\omega:[C]\to\mathbb{R}^{d}$ (one-hot or learned embedding).
Define $\mathbf{z}_v=\omega(y_v)$ if $v\in \mathcal{T}$ and $\mathbf{z}_v=\mathbf{0}$ otherwise.
We aggregate retrieved neighbors and apply an MLP update:
\begin{align}
\mathbf{x}_u^{\prime}=\operatorname{MLP}\left(\mathbf{x}_u, \bigoplus\left(\{\mathbf{x}_v+\mathbf{z}_v: v \in \mathcal{R}_u^{(k_2)}\} \right)\right),
\end{align}
Unlike transductive label propagation that directly diffuses labels along graph edges, \ours incorporates label embeddings as auxiliary retrieved context and does not directly overwrite unlabeled predictions. Labeled nodes influence which contexts are retrieved, but do not directly overwrite unlabeled representations. 


\subsubsection{Relation to prior work}
While prior efforts such as GraphRAG~\cite{graphrag} and G-Retriever~\cite{gretriever} enhance RAG systems using graph structures, they treat graphs as auxiliary knowledge for retrieval in language models.  
In contrast, \ours reverses this direction: it treats retrieval as the fundamental operation for graph learning itself.  
By shifting from message passing on fixed edges to retrieval-based dynamic connectivity, \ours transforms graph learning into a retrieval-augmented reasoning process.  
This perspective unifies semantic retrieval and structural aggregation, resulting in a scalable, adaptable, and label-aware framework that generalizes beyond traditional GNN design.

\subsection{Algorithm}
\label{sec:alg}
To facilitate a better understanding of the functionality of \ours, we provide PyTorch-style pseudocode for its implementation in Algorithm~\ref{algo:pseudo_rag}.











\begin{algorithm}[H]
\caption{PyTorch-style pseudocode for our model.}
\label{algo:pseudo_rag}

\begin{algorithmic}[1]

\Statex \textcolor{gray}{$\triangleright$ Inputs: text attributes $t$, optional labels $y$, graph $g$}

\Statex \textcolor{gray}{$\triangleright$ L-I: label-aware contextual embedding}
\State $C=\textcolor{blue}{\mathrm{Sample}}(g,k_1)$
\Comment{$C[u]$: structural contexts for node $u$}

\State $X=\textcolor{blue}{\mathrm{LLM}}(t,C,y)$
\Comment{$X=\{x_u\}_{u\in V}$: contextual node embeddings}

\State $Z=w(y)$
\Comment{set $Z[u]=0$ if $y_u$ is unavailable}

\Statex \textcolor{gray}{$\triangleright$ Semantic-structural context retrieval}
\State $S_{\mathrm{sem}}
=\textcolor{blue}{\mathrm{Norm}}\!\left(
\mathrm{CosSim}(X)
\right)$
\Comment{semantic relevance}

\State $S_{\mathrm{str}}
=\textcolor{blue}{\mathrm{Norm}}\!\left(
\mathrm{PPR}(g)
\right)$
\Comment{structural relevance}

\State $S=\lambda S_{\mathrm{sem}}+(1-\lambda)S_{\mathrm{str}}$
\Comment{retrieval score fusion}

\State $N=\textcolor{blue}{\mathrm{TopK}}(S,k_2)$
\Comment{$N[u]=\mathcal{R}_u$: retrieved contexts for node $u$}

\Statex \textcolor{gray}{$\triangleright$ L-II: label-enhanced retrieval aggregation}
\State $M=(X+Z)[N].\textcolor{blue}{\mathrm{mean}}(\mathrm{dim}=1)$
\Comment{aggregate retrieved contexts}

\State $H=X+M$
\Comment{update node representations}

\State $\mathrm{out}=\textcolor{blue}{\mathrm{MLP}}(H)$

\State \textcolor{blue}{\textbf{return}} $\mathrm{out}$

\end{algorithmic}
\end{algorithm}

\section{Theoretical Analysis}
In this section, we are motivated to bridge the gap between RAG and GNNs. We first develop a theoretical connection between message-passing GNNs and RAG by viewing both as instances of prediction over retrieved context sets. This perspective clarifies why message passing improves over node-independent models. Then, we demonstrate the robustness and optimization stability of retrieval-induced graph learning. Due to space limitations, we present simplified statements here and defer the full theorems and proofs to the supplementary material.


\begin{theorem}[Informal]
\label{thm:topk_as_attention_short}
Let $\gamma_u>0$ be the top-$k$ margin of retrieval scores with query node $u$, i.e., the gap between the $k$-th and the $k+1$-th largest score. Further let a GNN oracle be an arbitrary softmax-attention message passing at temperature $\tau$ induced by retrieval scores. Then under the regularity conditions as detailed in the supplementary material, there exists a RAG-style construction that approximates the oracle message at node $u$ with $\ell_\infty$ error $O(\exp(-\gamma_u/\tau))$.
\end{theorem}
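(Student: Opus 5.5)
We set up notation first. For query $u$, let the scores be $s_v=S(u,v)$ for $v\neq u$, and let the oracle message be the softmax-attention aggregate
\[
\mathbf{m}_u^\star=\sum_{v\neq u}\alpha_v\mathbf{h}_v,\qquad \alpha_v=\frac{\exp(s_v/\tau)}{\sum_{j\neq u}\exp(s_j/\tau)}.
\]
Here $\mathbf{h}_v$ are the value vectors, e.g.\ $\mathbf{x}_v+\mathbf{z}_v$. The RAG-style construction retrieves $\mathcal{R}_u^{(k)}$ by top-$k$ on the same scores. It then aggregates with a softmax restricted to the retrieved set:
\[
\widehat{\mathbf{m}}_u=\sum_{v\in\mathcal{R}_u^{(k)}}\beta_v\mathbf{h}_v,\qquad \beta_v=\frac{\exp(s_v/\tau)}{\sum_{j\in\mathcal{R}_u^{(k)}}\exp(s_j/\tau)}.
\]
This is a permutation-invariant set function of the retrieved contexts together with their scores, so it fits the template $\phi(\mathbf{x}_u,\bigoplus(\cdot))$ of Eq.~\eqref{eq:rag}. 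We use the following regularity conditions: bounded values $\|\mathbf{h}_v\|_\infty\le B$, a finite candidate set of size $N$ (or a summable tail), and the margin $s_{(k)}-s_{(k+1)}\ge\gamma_u$.

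\textbf{Step 1: reduce to the tail mass.} Let $\mathcal{R}=\mathcal{R}_u^{(k)}$, $Z_{\rm in}=\sum_{v\in\mathcal{R}}e^{s_v/\tau}$, $Z_{\rm out}=\sum_{v\notin\mathcal{R}}e^{s_v/\tau}$, and $\varepsilon=Z_{\rm out}/(Z_{\rm in}+Z_{\rm out})$. For $v\in\mathcal{R}$ we have $\alpha_v=(1-\varepsilon)\beta_v$. Hence
\[
\mathbf{m}_u^\star-\widehat{\mathbf{m}}_u=-\varepsilon\,\widehat{\mathbf{m}}_u+\sum_{v\notin\mathcal{R}}\alpha_v\mathbf{h}_v,
\]
and each term has $\ell_\infty$ norm at most $\varepsilon B$. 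This gives $\|\mathbf{m}_u^\star-\widehat{\mathbf{m}}_u\|_\infty\le 2B\varepsilon$.

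\textbf{Step 2: bound $\varepsilon$ by the margin.} Every retrieved score satisfies $s_v\ge s_{(k)}$, so $Z_{\rm in}\ge k\,e^{s_{(k)}/\tau}$. Every non-retrieved score satisfies $s_v\le s_{(k+1)}\le s_{(k)}-\gamma_u$, so $Z_{\rm out}\le (N-1-k)\,e^{(s_{(k)}-\gamma_u)/\tau}$. Therefore
\[
\varepsilon\le \frac{Z_{\rm out}}{Z_{\rm in}}\le \frac{N-1-k}{k}\,e^{-\gamma_u/\tau},
\]
and the total error is at most $\tfrac{2B(N-1-k)}{k}e^{-\gamma_u/\tau}=O(e^{-\gamma_u/\tau})$. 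This bound can be sharpened using the within-set score spread: replace $k\,e^{s_{(k)}/\tau}$ by the actual $Z_{\rm in}$. That is how the dependence on "the score distribution within the retrieved context set" mentioned in the introduction would enter.

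\textbf{Main obstacle.} The hardest step is making the constant in the $O(\cdot)$ independent of the graph size. The factor $(N-1-k)/k$ grows with $N$, so the regularity condition has to control it. One option is to assume the tail mass $\sum_{v\notin\mathcal{R}}e^{(s_v-s_{(k+1)})/\tau}$ is bounded by a constant $M$. This holds, for example, if scores decay away from the top or if PPR scores are summable. With that assumption the bound becomes $\tfrac{2BM}{k}e^{-\gamma_u/\tau}$. A second point to handle is that the hybrid score of Eq.~\eqref{eq:hybrid_score} is min-max normalized and lies in $[0,1]$. This keeps the exponentials well defined. It also shows the margin must be measured on the fused score $S$, not on the semantic and structural components separately.

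Finally, if the intended construction is the plain mean aggregator of Algorithm~\ref{algo:pseudo_rag}, we would add one of two things. Either assume near-uniform scores inside $\mathcal{R}$, which costs an extra error term depending on the within-set spread, or let the MLP $\phi$ receive the scores so that it can realize the reweighting by $\beta_v$.
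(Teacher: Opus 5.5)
Your argument is correct, and your Steps 1 and 2 are the paper's first and last steps. Your $\widehat{\mathbf{m}}_u$ is the paper's renormalized truncated attention message $\tilde{\mathbf{m}}_u(\tau)$, and your $2B\varepsilon$ is its leakage bound $2B\pi_u(\tau)$. Your margin estimate is the same $Z_{\mathrm{out}}/Z_{\mathrm{in}}$ ratio argument; because you lower-bound $Z_{\mathrm{in}}$ by all $k$ retrieved terms rather than one, you gain a factor $1/k$ over the paper's $(N-1-k)e^{-\gamma_u/\tau}$.

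The genuine difference is the choice of construction. The paper takes the approximant to be the plain top-$k$ mean $\mathbf{m}_u^{\mathrm{mean}}$ used in Algorithm~\ref{algo:pseudo_rag}. It therefore adds a middle step: since the renormalized weights lie in $[k^{-1}e^{-\delta_u/\tau},\,k^{-1}e^{\delta_u/\tau}]$, it gets $\|\tilde{\mathbf{m}}_u(\tau)-\mathbf{m}_u^{\mathrm{mean}}\|_\infty\le B(e^{\delta_u/\tau}-1)$ with $\delta_u=s(u,v_{(1)})-s(u,v_{(k)})$. That spread term is where ``the score distribution within the retrieved set'' enters. It does not enter through a sharper lower bound on $Z_{\mathrm{in}}$, as you suggest at the end of Step 2.

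The two routes buy different things. Your score-weighted restricted softmax gives an error that is genuinely $O(e^{-\gamma_u/\tau})$, matching the informal statement literally, but the aggregator must see the scores and $\tau$. The paper's route certifies the score-free mean that is actually implemented, at the cost of a term that does not decay with $\gamma_u$ and becomes vacuous as $\tau\to 0$ unless $\delta_u=0$. So it delivers the stated rate only when the retrieved scores are nearly tied, which is the trade-off your closing paragraph anticipates.

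The paper also makes no attempt to remove the $N$-dependence; it keeps $(N-1-k)$ inside the constant. Your tail-mass assumption is therefore an optional refinement, not a required step.
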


\begin{theorem}
\label{thm:tolerance1}
Let $\mathcal{R}_u$ be a retrieved set for some query node $u$, with prediction distribution
$p(\mathcal{R}_u)=(p_1,\ldots,p_C)$ induced by retrieved-context aggregation.
Consider an outlier node $o\in\mathcal{R}_u$ whose individual distribution is $p_o=(p_1',\ldots,p_C')$ and define
$m'=\arg\max_{c\in[C]} p_c'$.
Let $y_u$ be the ground-truth label of node $u$, treated as fixed during differentiation.
If $y_u\neq m'$ and $p_{m'}'\ge p_{m'}$, then
\[
0 \leq \frac{\partial \mathcal{L}(\mathcal{R}_u)}{\partial \ell'_{m'}} \leq
\frac{\partial \mathcal{L}(o)}{\partial \ell'_{m'}},
\]
where $\mathcal{L}(\mathcal{R}_u)=\mathrm{CE}(p(\mathcal{R}_u),y_u)$ and
$\mathcal{L}(o)=\frac{1}{|\mathcal{R}_u|}\mathrm{CE}(p_o,y_u)$, and $\ell'_{m'}$ denotes the outlier logit for class $m'$.
\end{theorem}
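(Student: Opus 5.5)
The plan is to first fix the aggregation model implicit in ``prediction distribution induced by retrieved-context aggregation'' so that both derivatives can be computed in closed form. I will take the logit-averaging form. Each retrieved node $v\in\mathcal{R}_u$ carries a logit vector $\bm{\ell}^{(v)}\in\mathbb{R}^C$, with the outlier's logits written $\bm{\ell}^{(o)}=\bm{\ell}'=(\ell'_1,\ldots,\ell'_C)$. The aggregated prediction is
\[
p(\mathcal{R}_u)=\mathrm{softmax}\Big(\tfrac{1}{|\mathcal{R}_u|}\textstyle\sum_{v\in\mathcal{R}_u}\bm{\ell}^{(v)}\Big),
\qquad p_o=\mathrm{softmax}(\bm{\ell}').
\]
This matches the mean aggregation of Algorithm~\ref{algo:pseudo_rag} followed by a linear readout: a mean of representations passed through a linear head is a mean of logits. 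The factor $\frac{1}{|\mathcal{R}_u|}$ in the definition of $\mathcal{L}(o)$ is exactly what makes the two losses comparable under this model.

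\textbf{Step 1 (aggregated loss).} I will use the standard cross-entropy identity $\partial\,\mathrm{CE}(\mathrm{softmax}(\mathbf{s}),y)/\partial s_c=\mathrm{softmax}(\mathbf{s})_c-\mathbb{I}[c=y]$. The chain rule through the averaged logit $\bar{\mathbf{s}}$ contributes $\partial\bar s_{m'}/\partial\ell'_{m'}=1/|\mathcal{R}_u|$. Holding $y_u$ fixed and using $y_u\neq m'$, this gives
\[
\frac{\partial\mathcal{L}(\mathcal{R}_u)}{\partial\ell'_{m'}}=\frac{p_{m'}}{|\mathcal{R}_u|}\ \ge 0 .
\]

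\textbf{Step 2 (isolated outlier loss).} The same identity applied to $\mathcal{L}(o)=\frac{1}{|\mathcal{R}_u|}\mathrm{CE}(p_o,y_u)$, again with $y_u\neq m'$, gives
\[
\frac{\partial\mathcal{L}(o)}{\partial\ell'_{m'}}=\frac{p'_{m'}}{|\mathcal{R}_u|}.
\]

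\textbf{Step 3 (comparison).} The hypothesis $p'_{m'}\ge p_{m'}$ now yields the two-sided bound $0\le p_{m'}/|\mathcal{R}_u|\le p'_{m'}/|\mathcal{R}_u|$ directly. I would close with the interpretation: the hypothesis says the outlier is more confident in its wrong class $m'$ than the aggregate is. Consequently, the gradient that pushes down the outlier's wrong logit is damped inside the retrieved set, so the outlier's influence is attenuated rather than amplified.

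The main obstacle is Step 0, pinning down the aggregation so that the stated hypothesis is exactly the needed one. If one instead averages probabilities, $p=\frac{1}{|\mathcal{R}_u|}\sum_v p_v$, the corresponding derivative is $\frac{1}{|\mathcal{R}_u|}\,p'_{y_u}p'_{m'}/p_{y_u}$. Bounding that by $p'_{m'}/|\mathcal{R}_u|$ would require $p'_{y_u}\le p_{y_u}$ rather than $p'_{m'}\ge p_{m'}$. I would therefore state the logit-mean (linear-readout) model explicitly as the regularity assumption. If the probability-mean model must be covered as well, I would add a remark that the analogous conclusion holds under the condition $p'_{y_u}\le p_{y_u}$.
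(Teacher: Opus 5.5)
Your proposal is correct and follows the paper's proof essentially verbatim. The paper also defines the set-level logits by mean pooling and applies the softmax--cross-entropy gradient $p-\mathbf{e}$ with the chain-rule factor $1/|\mathcal{R}_u|$ to both losses, then concludes from $e_{m'}=0$ and $p'_{m'}\ge p_{m'}$; your side remark that probability averaging would instead require $p'_{y_u}\le p_{y_u}$ is a correct addition that the paper does not discuss.
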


\begin{remark}
If the attention scores are constant on the context set, softmax attention reduces to uniform averaging, which matches the mean aggregator used in models such as GCN and GraphSAGE. 
Sum aggregation differs only by a degree-dependent scaling and can be viewed as the same update up to normalization. 
Therefore, the message-level conclusion remains informative for a broad class of message-passing GNNs, even when the final layer implementation does not explicitly use attention.
\end{remark}


\begin{remark}
Theorem~\ref{thm:tolerance1} shows that when retrieval introduces an outlier node $o$ whose most confident class $m'$ conflicts with the ground-truth label $y_u$ and satisfies $p'_{m'}\ge p_{m'}$, the retrieved-context objective $\mathcal{L}(\mathcal{R}_u)$ is more tolerant compared to the individual supervision $\mathcal{L}(o)$, as evidenced by a smaller penalty imposed by the gradient.
\end{remark}

\section{Experiments}
In this section, we present empirical evaluation results of \ours on real-world graph benchmarks.
The source code, including the datasets and all necessary scripts for reproducing the results, is available at \code.

\subsection{Experimental settings}
\label{sec:setting}

\subsubsection{Datasets}
We evaluate \ours on a diverse collection of text-attributed graphs covering both homophilic and heterophilic settings. 
Specifically, we use six homophilic datasets from the CSTAG benchmark~\cite{cstag}, including Books-Children (Children), Books-History (History), Ele-Computers (Computers), Ele-Photo (Photo), Sports-Fitness (Fitness), and Ogbn-Arxiv-TA (Arxiv). 
We further include four heterophilic web-page graphs, namely Cornell, Texas, Wisconsin, and Washington~\cite{craven1998learning}. 
In all datasets, each node is associated with textual attributes and a category label, and the task is node classification. 
For Arxiv, we use the official public split provided by the OGB benchmark~\cite{hu2020ogb}. 
For all remaining datasets, following prior work~\cite{cstag,wang2025generalization}, we adopt a random 60\%/20\%/20\% split for train/validation/test. 
Dataset statistics are summarized in Table~\ref{tab:dataset}.

We briefly describe each dataset group below.

\begin{itemize}
    \item \textbf{Ogbn-Arxiv-TA (Arxiv)}~\cite{cstag} is derived from the citation dataset Ogbn-Arxiv~\cite{hu2020ogb}. 
    The task is to predict paper categories, formulated as a 40-class classification problem. 
    The textual attributes of each paper node are extracted from its title and abstract.

    \item \textbf{Books-Children and Books-History}~\cite{cstag} are extracted from the Amazon-Books dataset. 
    Books-Children contains items under the second-level category ``Children'', while Books-History contains items under the second-level category ``History''. 
    Nodes represent books, and edges indicate frequent co-purchase or co-viewing relations. 
    The labels correspond to third-level book categories, and the title and description of each book are used as textual attributes.

    \item \textbf{Ele-Computers and Ele-Photo}~\cite{cstag} are extracted from the Amazon-Electronics dataset. 
    Ele-Computers contains products under the second-level category ``Computers'', while Ele-Photo contains products under the second-level category ``Photo''. 
    Nodes represent electronics-related products, and edges indicate frequent co-purchase or co-viewing relations.

    \item \textbf{Sports-Fitness}~\cite{cstag} is extracted from the Amazon-Sports dataset and consists of items under the second-level category ``Fitness''. 
    Nodes represent fitness-related products, and edges indicate frequent co-purchase or co-viewing relations.

    \item \textbf{Cornell, Texas, Wisconsin, and Washington}~\cite{craven1998learning} are university web-page graphs. 
    Nodes represent web pages, edges represent hyperlinks, and textual features are derived from page content. 
    Each node is classified into one of several page types, such as student, faculty, or department.
\end{itemize}

\begin{table}[t]
  \caption{Datasets statistics. Both homophilic and heterophilic graphs are included in our experiments.}
  \label{tab:dataset}
  \centering
  \small
  \resizebox{\linewidth}{!}
  {
    \begin{tabular}{lccccc}
      \Xhline{1.2pt}
      \rowcolor{tableheadcolor} \textbf{Dataset} & \textbf{\#Nodes} & \textbf{\#Edges} & \textbf{\#Classes} & \textbf{Type}\\
      \midrule
      Children   & 76,875           & 1,554,578        & 24          & Homophilic       \\
      \rowcolor{tablerowcolor} History    & 41,551           & 358,574          & 12          & Homophilic       \\
      Computers    & 87,229           & 721,081          & 10          & Homophilic       \\
      \rowcolor{tablerowcolor} Photo        & 48,362           & 500,928          & 12          & Homophilic       \\
      Fitness   & 173,055          & 1,773,500        & 13         & Homophilic        \\
      \rowcolor{tablerowcolor} Arxiv    & 169,343          & 1,166,243        & 40          & Homophilic       \\
      \midrule
Cornell & 191 & 292 & 5 & Heterophilic \\
\rowcolor{tablerowcolor} Texas & 187 & 310 & 5 & Heterophilic \\
Wisconsin & 265 & 510 & 5 & Heterophilic \\
\rowcolor{tablerowcolor} Washington & 229 & 394 & 5 & Heterophilic \\       
      \Xhline{1.2pt}
    \end{tabular}
    
  }
\end{table}

\subsubsection{Baselines}
We compare \ours with a broad range of baselines covering graph neural networks, self-supervised graph learning methods, LLM-based predictors, and graph-oriented LLM methods. 
Specifically, the baselines fall into four categories. 
\textit{$\blacktriangleright$ Standard GNNs}: GCN~\cite{gcn}, GAT~\cite{gat}, GraphSAGE~\cite{graphsage}, and DGT~\cite{DGT}; 
\textit{$\blacktriangleright$ Graph contrastive learning (GCL) methods}: DGI~\cite{dgi}, BGRL~\cite{bgrl}, MaskGAE~\cite{maskgae}, and GraphMAE~\cite{graphmae}; 
\textit{$\blacktriangleright$ Vanilla LLMs}: \textsc{Qwen2.5-7B-Instruct}, \textsc{Llama-3.1-8B-Instruct}, \textsc{Qwen3-8B}, \textsc{GPT-4o}, and \textsc{DeepSeek-V4}; 
\textit{$\blacktriangleright$ Graph LLMs}: GLEM~\cite{glem}, TAPE~\cite{tape}, LlaGA~\cite{llaga}, OFA~\cite{ofa}, GOFA~\cite{gofa}, LLM-BP~\cite{wang2025generalization}, AuGLM~\cite{auglm}, and DENSE~\cite{bundle}. 
In adversarial robustness experiments, we additionally compare with robust GNNs, including RGCN~\cite{robustgcn}, MedianGCN~\cite{mediangcn}, SimPGCN~\cite{simpgcn}, and ProGNN~\cite{prognn}.
We briefly describe these baselines below.

\noindent\textbf{Standard GNNs.}
\begin{itemize}
    \item \textbf{GCN}~\cite{gcn}: A canonical graph convolutional model that mixes node features with a normalized adjacency operator. 
    \item \textbf{GAT}~\cite{gat}: Replaces fixed neighbor weights with learned attention coefficients, so different neighbors contribute unequally to the update. 
    \item \textbf{GraphSAGE}~\cite{graphsage}: An inductive GNN that samples neighbors and aggregates them with simple set functions, such as mean aggregation, followed by an MLP.
    \item \textbf{DGT}~\cite{DGT}: A sparse graph Transformer that dynamically selects structurally and semantically relevant nodes and applies sparse attention with learnable Katz positional encodings for efficient large-scale graph representation learning.
\end{itemize}

\noindent\textbf{Graph contrastive learning methods.}
\begin{itemize}
    \item \textbf{DGI}~\cite{dgi}: A self-supervised method that trains an encoder by maximizing mutual information between local patch embeddings and a global graph summary, using corrupted graphs as negatives. It learns representations without labels and then fits a downstream classifier.
    \item \textbf{BGRL}~\cite{bgrl}: A negative-free bootstrap approach with an online network predicting the target network's representations under graph augmentations. It stabilizes training via a stop-gradient target, similar in spirit to BYOL on graphs.
    \item \textbf{MaskGAE}~\cite{maskgae}: A masked graph autoencoding framework that hides parts of the graph structure and trains an encoder--decoder to reconstruct them. The masking objective encourages representations that capture both semantics and topology.
    \item \textbf{GraphMAE}~\cite{graphmae}: A masked autoencoder that reconstructs masked node attributes from partially observed graphs via an encoder--decoder pipeline. It is commonly used as self-supervised pretraining before supervised evaluation.
\end{itemize}

\noindent\textbf{Vanilla LLMs.}
We include \textsc{Qwen2.5-7B-Instruct}, \textsc{Llama-3.1-8B-Instruct}, \textsc{Qwen3-8B}, \textsc{GPT-4o}, and \textsc{DeepSeek-V4} as zero-shot or prompt-based LLM baselines without task-specific fine-tuning. 
These baselines evaluate whether language models alone can perform node classification on text-attributed graphs without explicit graph modeling.

\noindent\textbf{Graph LLMs.}
\begin{itemize}
    \item \textbf{GLEM}~\cite{glem}: Couples a text encoder with a graph model and iteratively leverages text and structure signals, typically via pseudo-labeling or co-training style updates. It is a representative hybrid that explicitly fuses language and graph inductive biases.
    \item \textbf{TAPE}~\cite{tape}: Uses an LLM to produce task-aware textual signals, such as explanations or enriched descriptions, that are then distilled or encoded for learning on text-attributed graphs.
    \item \textbf{LlaGA}~\cite{llaga}: A graph-aware LLM framework that injects graph context into LLM-based prediction, for example by linearizing local neighborhoods into text, and adapts the model for graph tasks.
    \item \textbf{OFA}~\cite{ofa}: A general graph foundation-style baseline that targets broad task coverage with a unified backbone and lightweight task adaptation.
    \item \textbf{GOFA}~\cite{gofa}: A stronger generalization-oriented extension in the same spirit as OFA, emphasizing robustness and transfer across datasets and tasks.
    \item \textbf{LLM-BP}~\cite{wang2025generalization}: Treats LLM predictions as local evidence and refines them with a belief-propagation-style graph inference procedure.
    \item \textbf{AuGLM}~\cite{auglm}: An LLM-based node classifier that enriches input with topological and semantic retrieval and employs a lightweight GNN for candidate label pruning.
    \item \textbf{DENSE}~\cite{bundle}: A zero-shot LLM-on-graph baseline that reduces cost and noise by grouping or bundling nodes and contexts before querying the LLM.
\end{itemize}

\noindent\textbf{Robust GNN baselines.}
The following baselines are used specifically in adversarial robustness experiments.
\begin{itemize}
    \item \textbf{RGCN}~\cite{robustgcn}: A robustness-oriented GCN variant that reduces sensitivity to noisy or adversarial edges by explicitly modeling uncertainty or unreliable neighborhoods.
    \item \textbf{MedianGCN}~\cite{mediangcn}: Replaces mean aggregation with a coordinate-wise median operator to suppress outlier neighbor messages.
    \item \textbf{SimPGCN}~\cite{simpgcn}: Encourages similarity-preserving representations and typically reweights edges based on feature similarity, reducing the impact of adversarially injected edges.
    \item \textbf{ProGNN}~\cite{prognn}: Jointly learns a cleaned graph structure together with GNN parameters under structural constraints.
\end{itemize}

We next describe how \ours generates robustness-aware node embeddings, which serve as the basis for subsequent retrieval and aggregation.

\subsubsection{LLM-based Embedding Generation}
For LLM-based embedding generation, we use a robustness-aware prompt that asks the model to distinguish consistent neighbor evidence from potentially misleading or adversarial neighbor information. 
The prompt encourages the generated embedding to preserve the target node's core semantics while downweighting inconsistent neighbor signals. 
The full prompt is shown in Table~\ref{tab:prompts}.

\begin{table}[t]
  \centering
  \caption{Prompt used to generate robustness-aware text embeddings.}
  \label{tab:prompts}

  \rule{\linewidth}{1.5pt}
  \parbox{\linewidth}{
    \vspace{4pt}
    \textbf{Task}:\\
    You are analyzing a target node in a graph where each node has associated text and a class label. 
    You are also provided with its neighbors' texts and labels. 
    However, due to adversarial perturbations, some neighbors may intentionally provide misleading information.\\[2pt]

    Please follow these steps:\\
    1. Read the target node's text.\\
    2. Check the consistency of each neighbor's text and label with the target node's content.\\
    3. For consistent neighbors, softly incorporate their information; for conflicting ones, discard or downweight them.\\
    4. Generate a numerical embedding that reflects the target node's core semantics and is robust against inconsistent or adversarial neighbors. 
    The embedding should capture the semantic meaning of the text and be useful for similarity comparison, clustering, and retrieval tasks.\\[2pt]

    \textbf{Input Text}:\\
    \{input\_text\}\\[2pt]

    \textbf{Reference Labels}:\\
    \{reference\_labels\}
    \vspace{4pt}
  }
  \rule{\linewidth}{1.5pt}
\end{table}

\begin{table*}[t]
    \centering
    \caption{Node classification results (\%) on text-attributed graphs. The best and second-best performances are highlighted as \first{boldfaced} and \second{underlined}, respectively.}\label{tab:node_clas}
    \small
    \resizebox{\linewidth}{!}
    {
        \def\arraystretch{1.15}
        \begin{tabular}{lcccccccccc}
            \Xhline{1.2pt}
            \rowcolor{tableheadcolor}
                                                         & \textbf{Children}        & \textbf{History}         & \textbf{Photo}          & \textbf{Computers}       & \textbf{Fitness}         & \textbf{Arxiv}          & \textbf{Cornell}        & \textbf{Texas}          & \textbf{Wisc.}          & \textbf{Wash.}          \\
            \Xhline{1.2pt}

            MLP                                          & 51.2$_{\pm0.2}$          & 82.0$_{\pm0.9}$          & 60.7$_{\pm0.9}$         & 65.3$_{\pm0.4}$          & 87.2$_{\pm0.3}$          & 46.5$_{\pm0.8}$         & 49.5$_{\pm0.7}$         & 62.0$_{\pm1.8}$         & 46.5$_{\pm1.5}$         & 66.0$_{\pm0.9}$         \\
            \rowcolor{tablerowcolor} GCN                 & 56.3$_{\pm0.3}$          & 84.4$_{\pm0.3}$          & 84.0$_{\pm0.9}$         & 87.6$_{\pm0.4}$          & 92.1$_{\pm0.7}$          & 72.5$_{\pm0.6}$         & 52.6$_{\pm0.4}$         & 64.8$_{\pm1.5}$         & 49.0$_{\pm0.7}$         & 69.5$_{\pm0.5}$         \\
            GAT                                          & 56.4$_{\pm0.8}$          & 84.0$_{\pm0.6}$          & 85.3$_{\pm0.5}$         & 88.6$_{\pm0.2}$          & 92.5$_{\pm0.5}$          & 72.7$_{\pm0.8}$         & 53.2$_{\pm0.6}$         & 65.9$_{\pm1.4}$         & 52.3$_{\pm0.4}$         & 73.1$_{\pm0.7}$         \\
            \rowcolor{tablerowcolor} GraphSAGE           & \second{58.5$_{\pm0.1}$} & 86.1$_{\pm0.8}$          & 82.9$_{\pm0.7}$         & 87.9$_{\pm0.8}$          & 92.8$_{\pm0.5}$          & 72.9$_{\pm0.8}$         & 54.8$_{\pm0.5}$         & 64.2$_{\pm1.3}$         & 50.1$_{\pm1.0}$         & 71.4$_{\pm0.6}$         \\
            DGT                                   & 57.4$_{\pm0.7}$          & 86.2$_{\pm0.3}$          & 85.1$_{\pm0.5}$         & 87.3$_{\pm0.3}$          & 92.0$_{\pm0.5}$          & 74.6$_{\pm0.4}$         & 54.5$_{\pm0.3}$         & 67.3$_{\pm1.0}$         & 51.4$_{\pm1.2}$         & 71.5$_{\pm0.5}$         \\

            \Xhline{1.0pt}
            \rowcolor{tablerowcolor} DGI                 & 54.9$_{\pm0.5}$          & 84.9$_{\pm0.3}$          & 82.7$_{\pm0.6}$         & 86.9$_{\pm0.4}$          & 92.9$_{\pm0.7}$          & 72.1$_{\pm0.5}$         & 14.7$_{\pm0.9}$         & 11.2$_{\pm1.4}$         & 12.1$_{\pm0.7}$         & 21.0$_{\pm1.2}$         \\
            BGRL                                         & 55.9$_{\pm0.4}$          & 85.9$_{\pm0.6}$          & 83.7$_{\pm0.5}$         & 87.9$_{\pm0.2}$          & 93.9$_{\pm0.4}$          & 73.3$_{\pm0.7}$         & 16.5$_{\pm1.0}$         & 12.8$_{\pm1.7}$         & 13.6$_{\pm1.4}$         & 22.3$_{\pm1.1}$         \\
            \rowcolor{tablerowcolor} MaskGAE             & 57.1$_{\pm0.8}$          & 86.1$_{\pm0.9}$          & 83.8$_{\pm0.4}$         & \second{89.1$_{\pm0.3}$} & 93.2$_{\pm0.6}$          & 73.4$_{\pm0.5}$         & 24.9$_{\pm1.1}$         & 19.3$_{\pm2.0}$         & 24.0$_{\pm1.6}$         & 26.1$_{\pm1.4}$         \\
            GraphMAE                                     & 57.0$_{\pm0.6}$          & \second{86.2$_{\pm0.5}$} & 83.7$_{\pm0.7}$         & 87.9$_{\pm0.4}$          & \second{93.4$_{\pm0.5}$} & 73.3$_{\pm0.6}$         & 23.0$_{\pm0.7}$         & 17.7$_{\pm0.4}$         & 23.0$_{\pm1.7}$         & 24.9$_{\pm0.9}$         \\

            \Xhline{1.0pt}
            \textsc{Llama3.1-8B}                         & 20.9                     & 36.0                     & 47.8                    & 53.9                     & 43.2                     & 38.1                    & 40.2                    & 58.5                    & 50.0                    & 44.3                    \\
            \rowcolor{tablerowcolor} \textsc{Qwen2.5-7B} & 23.0                     & 19.2                     & 41.0                    & 52.4                     & 56.2                     & 34.3                    & 38.1                    & 56.3                    & 48.7                    & 42.0                    \\
            \textsc{Qwen3-8B} & 26.6                     & 29.4                     & 45.6                    & 55.3                    & 58.4                     & 38.5                    & 42.3                    & 59.6                    & 52.3                    & 47.5                    \\            
            \textsc{DeepSeek-V4}                         & 30.5                     & 61.4                     & 55.1                    & 56.3                     & 48.8                     & 45.9                    & 47.0                    & 63.4                    & 53.4                    & 49.6                    \\
            \rowcolor{tablerowcolor} \textsc{GPT-4o}     & 30.8                     & 53.3                     & 55.6                    & 60.3                     & 66.4                     & 46.8                    & 45.5                    & 63.1                    & 56.6                    & 48.9                    \\

            \Xhline{1.0pt}
                        \rowcolor{tablerowcolor} GLEM                & 52.1$_{\pm0.5}$                   & 84.0$_{\pm0.8}$                     & 79.8$_{\pm0.3}$                    & 80.2$_{\pm0.6}$                     & 88.9$_{\pm0.2}$                     & 74.7$_{\pm1.3}$                    & 57.8$_{\pm1.0}$                    & 70.6$_{\pm0.8}$                    & 63.2$_{\pm0.6}$                    & 68.9$_{\pm0.5}$                    \\
            TAPE                                         & 53.4$_{\pm0.7}$                     & 83.8$_{\pm0.6}$                     & 86.1$_{\pm0.5}$                    & 84.5$_{\pm0.2}$                     & 85.7$_{\pm0.7}$                     & \second{75.2$_{\pm0.3}$}           & 78.6$_{\pm0.6}$                    & 84.2$_{\pm0.9}$                    & 80.3$_{\pm1.1}$                    & 74.9$_{\pm1.2}$                    \\
            \rowcolor{tablerowcolor} LLaGA               & 53.2$_{\pm0.4}$                     & 84.9$_{\pm0.3}$                     & \second{87.1$_{\pm0.7}$}           & 88.0$_{\pm0.4}$                     & 92.0$_{\pm0.3}$                     & \second{75.2$_{\pm0.9}$}           & 82.6$_{\pm0.8}$                    & 85.5$_{\pm1.6}$                    & 85.1$_{\pm0.9}$                    & 70.5$_{\pm0.7}$                    \\
            OFA                                          & 50.1$_{\pm0.5}$                     & 78.3$_{\pm0.3}$                     & 82.9$_{\pm0.5}$                    & 81.5$_{\pm0.6}$                     & 87.2$_{\pm0.6}$                     & 69.8$_{\pm0.5}$                    & 75.8$_{\pm0.5}$                    & 81.8$_{\pm0.4}$                    & 74.8$_{\pm0.3}$                    & 76.0$_{\pm0.6}$                    \\
            \rowcolor{tablerowcolor} GOFA                & 52.2$_{\pm0.3}$                     & 76.3$_{\pm0.4}$                     & 82.4$_{\pm0.6}$                    & 80.9$_{\pm0.2}$                     & 85.7$_{\pm0.9}$                     & 69.0$_{\pm0.9}$                    & 39.5$_{\pm0.6}$                    & 38.4$_{\pm1.0}$                    & 32.5$_{\pm1.5}$                    & 31.0$_{\pm1.1}$                    \\
            LLM-BP                                       & 54.8$_{\pm0.2}$                     & 79.9$_{\pm0.7}$                     & 84.1$_{\pm0.3}$                    & 84.1$_{\pm0.6}$                     & 81.2$_{\pm0.5}$                     & 67.8$_{\pm0.8}$                    & 83.3$_{\pm0.2}$                    & 81.7$_{\pm1.4}$                    & 77.8$_{\pm1.3}$                    & 73.1$_{\pm1.5}$                    \\
            AuGLM                                       & 55.2$_{\pm0.6}$                     & 81.4$_{\pm0.2}$                     & 83.5$_{\pm0.7}$                    & 82.7$_{\pm0.8}$                     & 80.9$_{\pm0.6}$                     & 69.4$_{\pm0.4}$                    & 81.5$_{\pm0.7}$                    & 80.1$_{\pm0.8}$                    & 79.4$_{\pm1.2}$                    & 78.5$_{\pm0.8}$                    \\
            \rowcolor{tablerowcolor} DENSE               & 51.4$_{\pm0.7}$                     & 77.4$_{\pm0.3}$                    & 82.5$_{\pm0.8}$                   & 86.5$_{\pm0.6}$                     & 84.1$_{\pm0.4}$                     & 70.5$_{\pm0.9}$                    & \second{84.8$_{\pm1.1}$}           & \second{92.5$_{\pm1.5}$ }          & \second{87.2$_{\pm0.9}$}           & \second{81.7$_{\pm1.7}$}           \\

            \Xhline{1.0pt}
            \rowcolor{methodrowcolor} \ours (ours)              & \first{61.9$_{\pm0.7}$}  & \first{87.0$_{\pm0.5}$}  & \first{88.7$_{\pm0.4}$} & \first{91.2$_{\pm0.3}$}  & \first{94.2$_{\pm0.5}$}  & \first{75.4$_{\pm0.9}$} & \first{89.9$_{\pm0.2}$} & \first{97.4$_{\pm0.4}$} & \first{96.3$_{\pm0.3}$} & \first{82.6$_{\pm0.2}$} \\
            \Xhline{1.2pt}
        \end{tabular}
    }
\end{table*}

\subsubsection{Hyperparameters}
We use a unified hyperparameter setting across datasets for fair comparison. 
Specifically, hyperparameters are tuned on \textsc{Ogbn-Arxiv-TA} and then kept fixed for all other datasets. 
All trainable models are optimized with Adam~\cite{adam} for at most 500 epochs, using a learning rate of 0.001 and early stopping with a patience of 10 epochs. 
For \ours and neural baselines, we use a hidden dimension of 128 and adopt a two-layer architecture unless otherwise specified.

For the retrieval components in \ours, we retrieve $k_1=3$ labeled examples for label-aware text embedding and $k_2=5$ nodes for retrieval-augmented message aggregation; we also set the PPR teleport probability $\alpha=0.5$ and the semantic-structural fusion weight $\lambda=0.5$ throughout all experiments. 
Following prior work~\cite{bundle,wang2025generalization}, we use \texttt{bge-en-icl}~\cite{li2024makingtextembeddersfewshot} as the default text embedding model for dense retrieval.

\subsubsection{Implementation and Evaluation}
We implement \ours and all trainable baselines using PyTorch~\cite{pytorch} and PyTorch Geometric~\cite{pyg}.
Accuracy is used as the evaluation metric. 
For MLP, GNN-based and GraphLLM methods, we report the average accuracy and standard deviation over ten runs. 
For LLM-based baselines, due to computational cost, we report the average results over three runs to account for the variability introduced by few-shot sampling.

To avoid label leakage, \ours uses only training labels as label-aware guidance. 
For validation and test nodes, as well as nodes without labels, we replace the label input with a padding token so that no ground-truth label information is injected beyond the training set. 
All datasets used in the experiments are publicly available. 
Unless otherwise specified, all experiments are conducted on an NVIDIA RTX 4090 GPU with 24 GB memory. 
We use the Transformers~\cite{transformers} library to implement the LLM models.

\subsection{Main results}
The node classification results on both homophilic and heterophilic datasets are presented in Table~\ref{tab:node_clas}.
Several key observations emerge from the results.

\textbf{RAG is a strong alternative to message passing}:
        Traditional GNNs and GCL methods rely on explicit message passing and structural encoding to capture graph relationships. However, our results show that \ours, which replaces conventional message passing with retrieval-based augmentation, consistently outperforms these approaches.
        Although we do not explicitly use recursive message passing in the original graph structure, \ours surpasses standard GNNs and GCL methods by a large margin, demonstrating the advantage of leveraging textual information in conjunction with graph structures.

\textbf{Simple MLP with RAG outperforms graph LLMs}:
        \ours is a simple MLP network augmented with RAG mechanisms, making it more parameter-efficient compared to graph LLMs. Compared to graph LLMs such as TAPE and LLaGA, \ours achieves significant improvements, particularly on datasets with richer textual features and heterophilic settings. This result highlights the effectiveness of retrieval-based augmentation in enhancing model performance without the need for specialized graph-specific pretraining.

\textbf{Graph-aware adaptation is crucial for LLMs to perform well on graph tasks}:
        We observe that vanilla LLMs perform poorly without fine-tuning, emphasizing the necessity of graph-aware adaptations for learning on text-attributed graphs. While LLMs excel at unstructured tasks, their inability to inherently model graph topology leads to subpar performance when applied directly to node classification tasks. The retrieval-based augmentation in \ours addresses this limitation by dynamically providing relevant contextual information, enabling LLMs to reason over graph data more effectively.




\begin{figure*}[t]
  \centering
  \includegraphics[width=\linewidth]{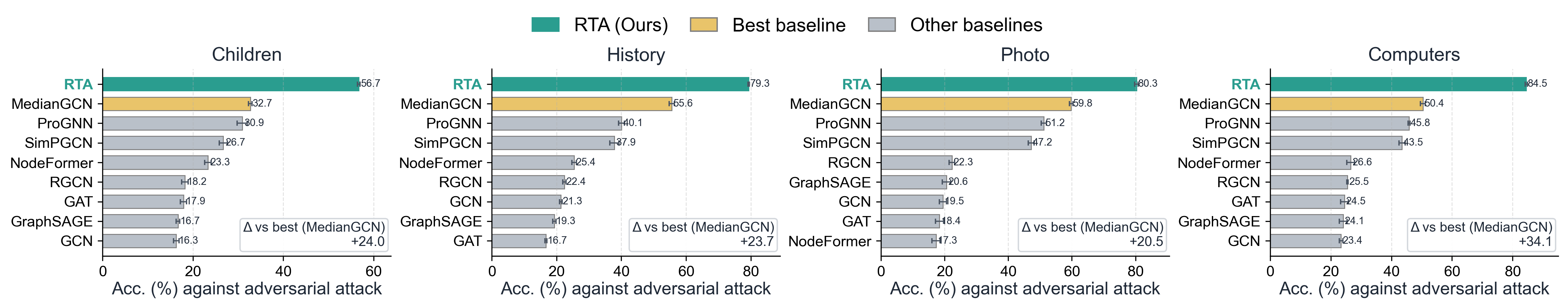}
  \caption{Performance comparison against graph adversarial attack \texttt{SGA}~\cite{sga}.}
  \label{fig:robustness}
\end{figure*}

\subsection{Robustness against adversarial attacks}
We evaluate adversarial robustness on four benchmark graphs (Children, History, Photo, and Computers) under the structure-perturbation attack \texttt{SGA}~\cite{sga}, which degrades node classification by adversarially modifying edges at test time. Figure~\ref{fig:robustness} reports accuracy under \texttt{SGA} (clean results omitted).

\ours consistently achieves the best performance across all datasets. Under \texttt{SGA} attack, \ours attains $56.7/79.3/80.3/84.5$ accuracy on Children/History/Photo/Computers, outperforming the strongest baseline by clear margins. In contrast, standard GNNs (e.g., GCN, GAT) collapse under attack, and robust defenses (e.g., MedianGCN) improve stability but remain substantially below \ours.
We attribute this robustness to retrieval-based message passing: instead of relying on a single, potentially poisoned local neighborhood, \ours retrieves task-relevant context that is less coupled to any specific adjacency and provides redundant evidence from multiple supports, leading to more stable predictions under edge perturbations.

\begin{figure*}[t]
  \centering
  \subfloat[Children]{\includegraphics[width=0.24\linewidth]{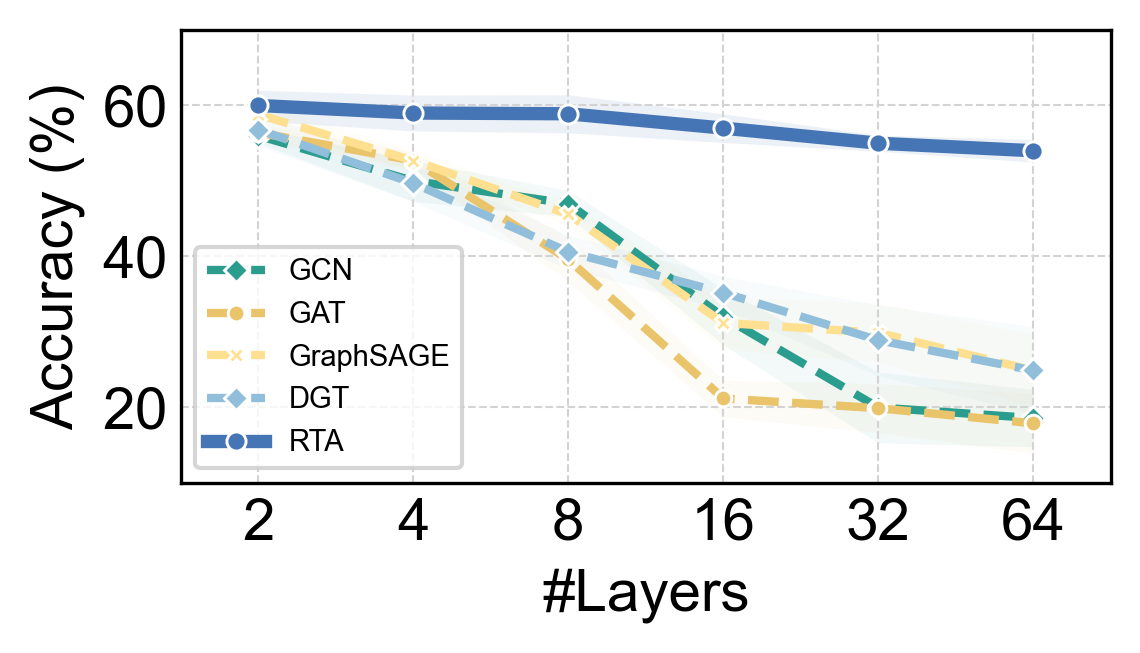}}
  \subfloat[History]{\includegraphics[width=0.24\linewidth]{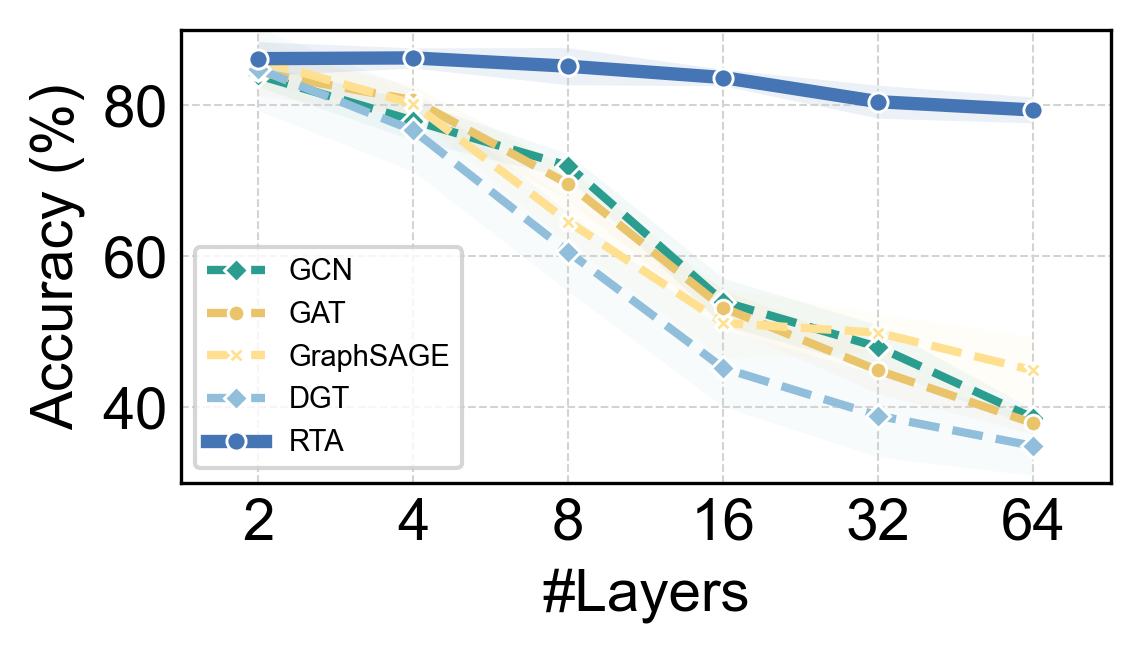}}
  \subfloat[Photo]{\includegraphics[width=0.24\linewidth]{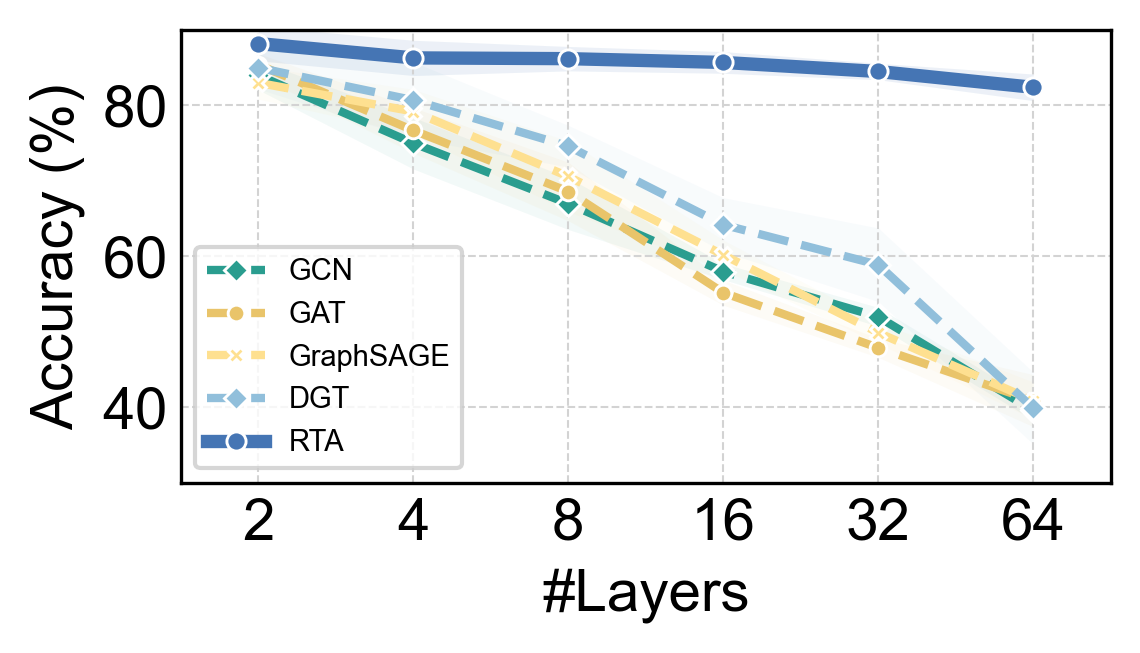}}
  \subfloat[Computers]{\includegraphics[width=0.24\linewidth]{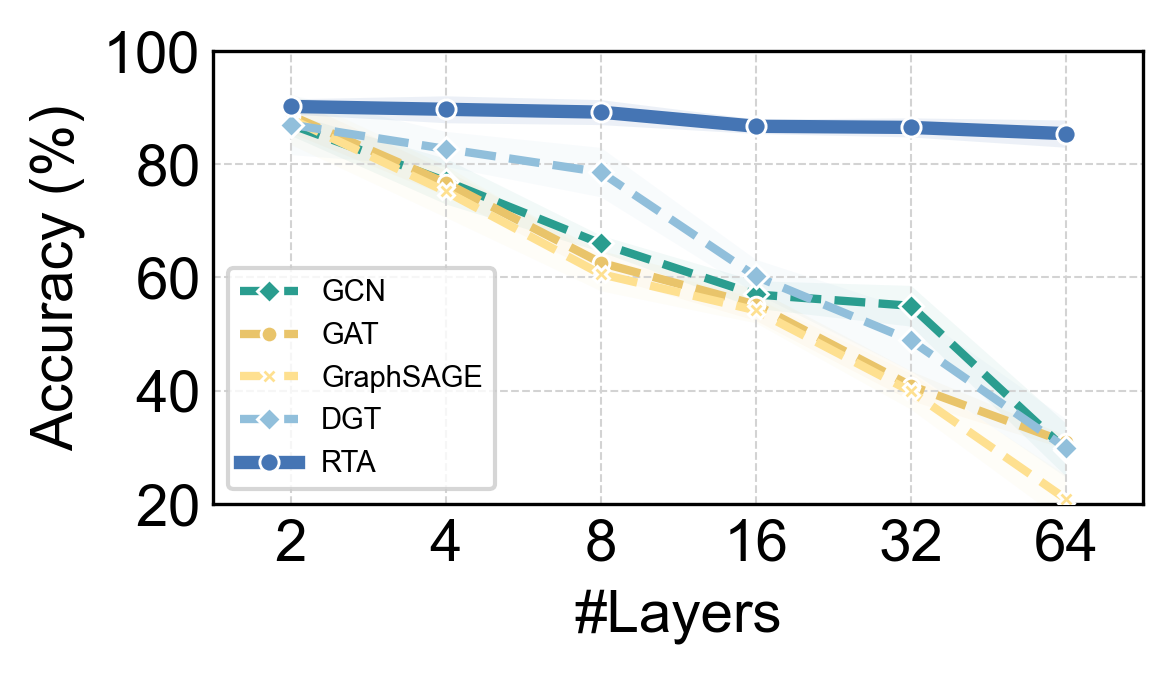}}
  \caption{Node classification performance with different network depth. Here we compare only with representative GNNs.}
  \label{fig:oversmoothing}
\end{figure*}

\subsection{Robustness against oversmoothing}
Although message passing enables GNNs to capture local graph structures, standard GNNs are known to suffer from the oversmoothing problem. Oversmoothing is a common phenomenon in GNNs, where increasing network depth leads to a deterioration in performance due to node representations becoming indistinguishable.
To further investigate whether \ours can mitigate the performance degradation of GNN methods in deep architectures, we conduct experiments on the Children and History datasets with varying numbers of layers/depths, $L \in \{2, 4, 8, 16, 32\}$. The results of different GNN methods are shown in Figure~\ref{fig:oversmoothing}.

As observed in Figure~\ref{fig:oversmoothing}, standard GNNs experience severe oversmoothing as network depth increases. The learned representations in these methods collapse at greater depths, leading to significantly degraded performance beyond four layers. In contrast, \ours demonstrates remarkable robustness against oversmoothing, scaling effectively to deeper architectures without substantial performance loss.
Notably, \ours achieves state-of-the-art performance across all depth levels. As the number of layers increases, \ours outperforms all baselines by increasingly larger margins. This highlights the effectiveness of retrieval-based augmentation in preserving representation quality and mitigating the limitations of traditional message-passing-based GNNs when scaling to deeper networks.

\subsection{Efficiency comparison}
\begin{figure}[h]
  \centering
  \includegraphics[height=0.25\linewidth]{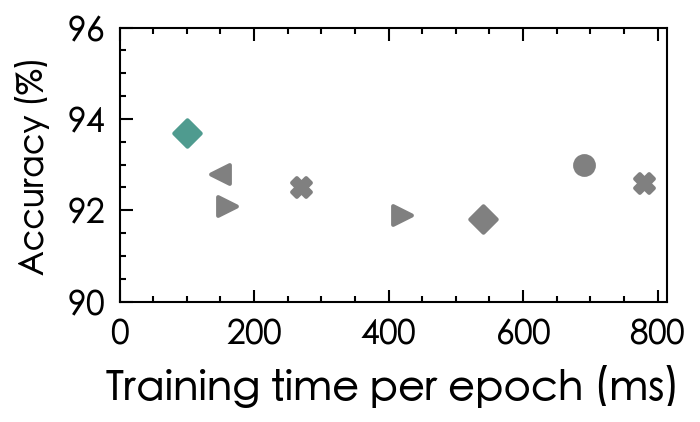}
  \includegraphics[height=0.25\linewidth]{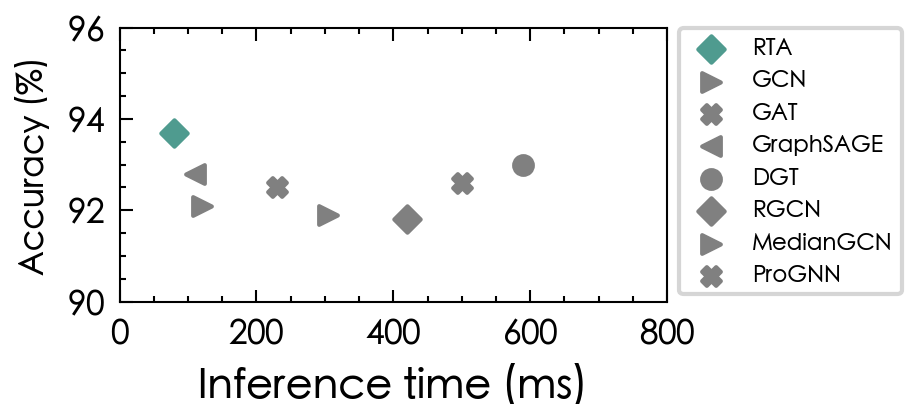}
  \caption{Comparison of training and inference time on Fitness dataset across different GNNs.}
  \label{fig:eff}
\end{figure}

\begin{figure*}[t]
  \centering

  \subfloat[Children]{
    \includegraphics[width=0.23\textwidth]{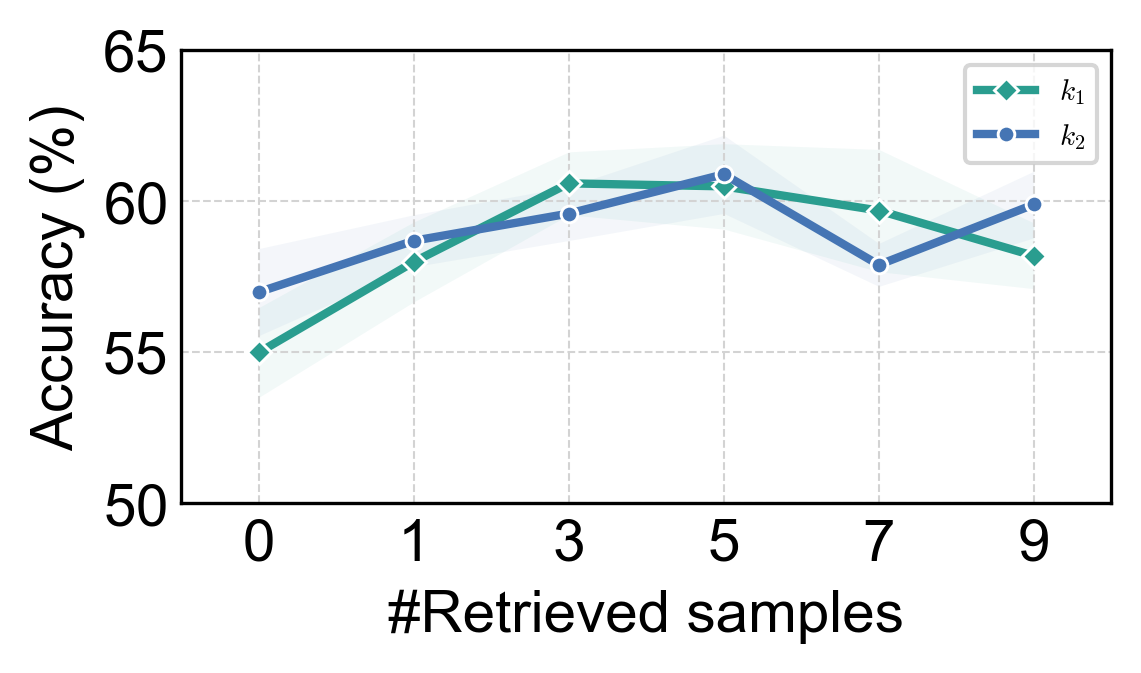}
  }
  \subfloat[History]{
    \includegraphics[width=0.23\textwidth]{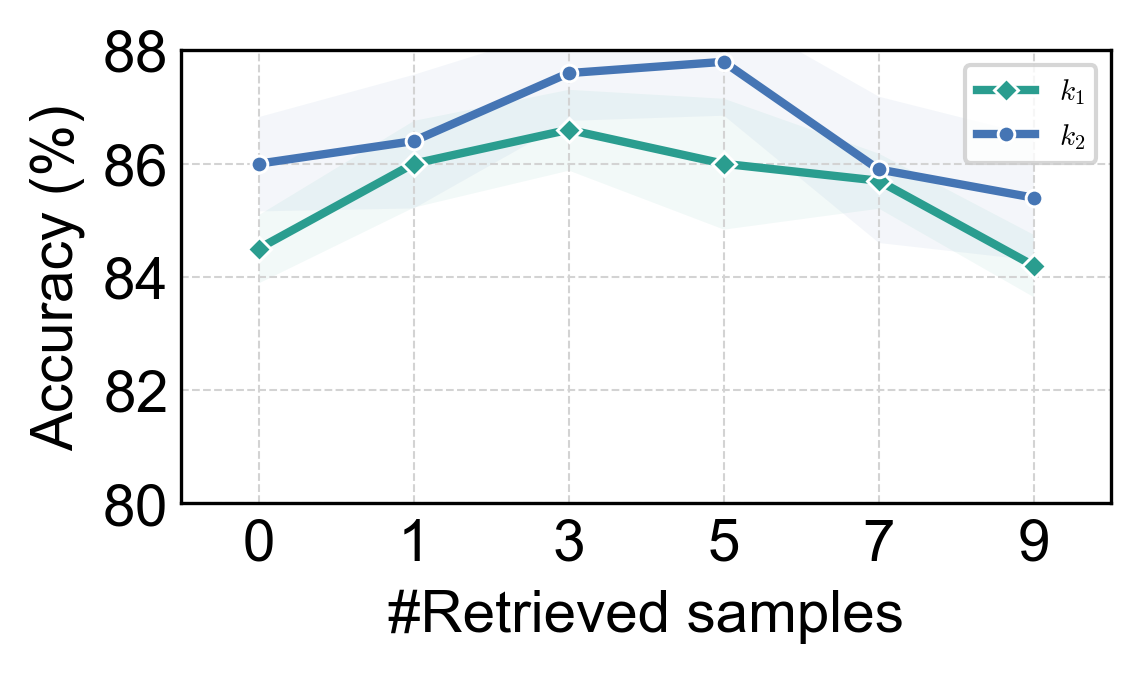}
  }
  \subfloat[Photo]{
    \includegraphics[width=0.23\textwidth]{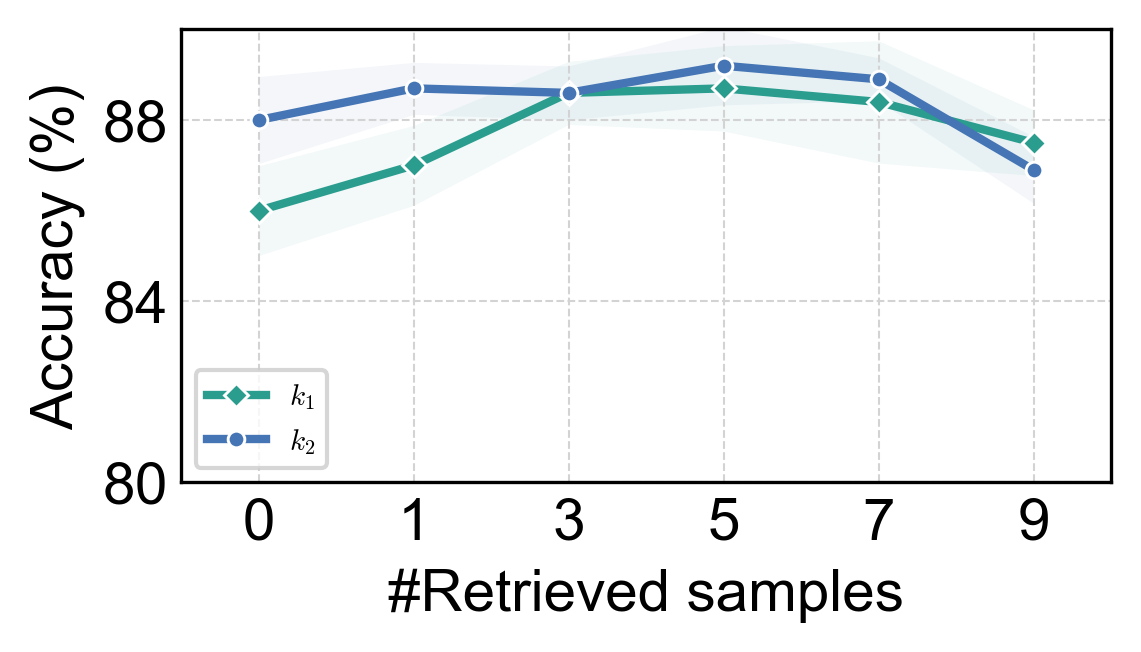}
  }
  \subfloat[Computer]{
    \includegraphics[width=0.23\textwidth]{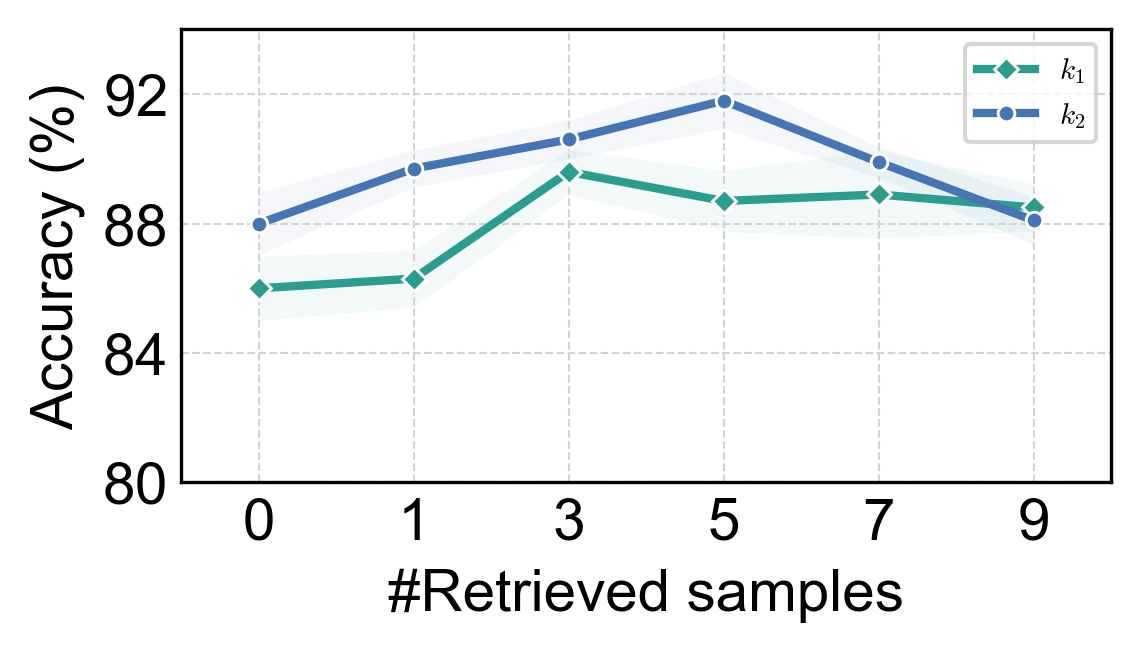}
  }

  \vspace{1mm}

  \subfloat[Cornell]{
    \includegraphics[width=0.23\textwidth]{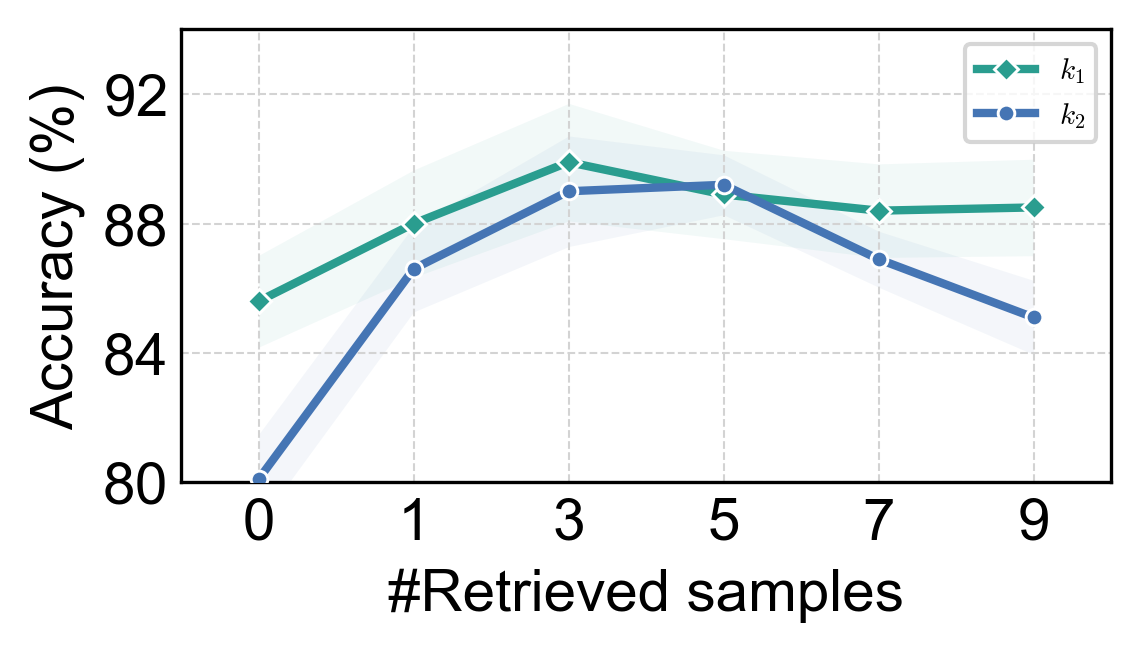}
  }
  \subfloat[Texas]{
    \includegraphics[width=0.23\textwidth]{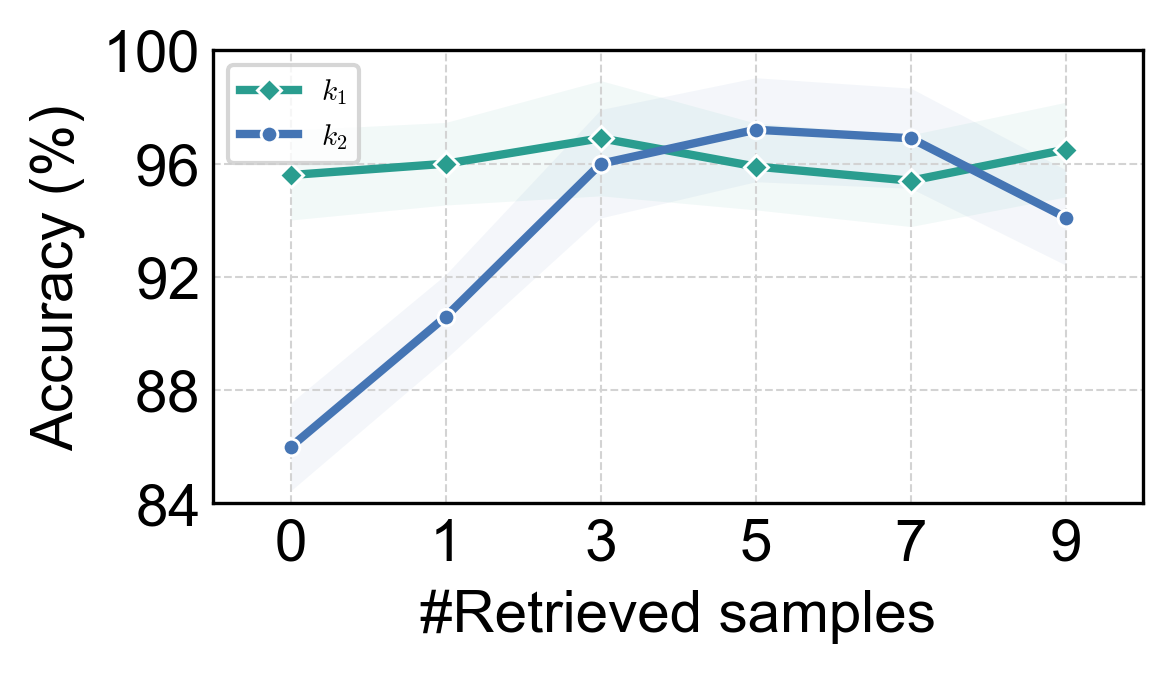}
  }
  \subfloat[Wisconsin]{
    \includegraphics[width=0.23\textwidth]{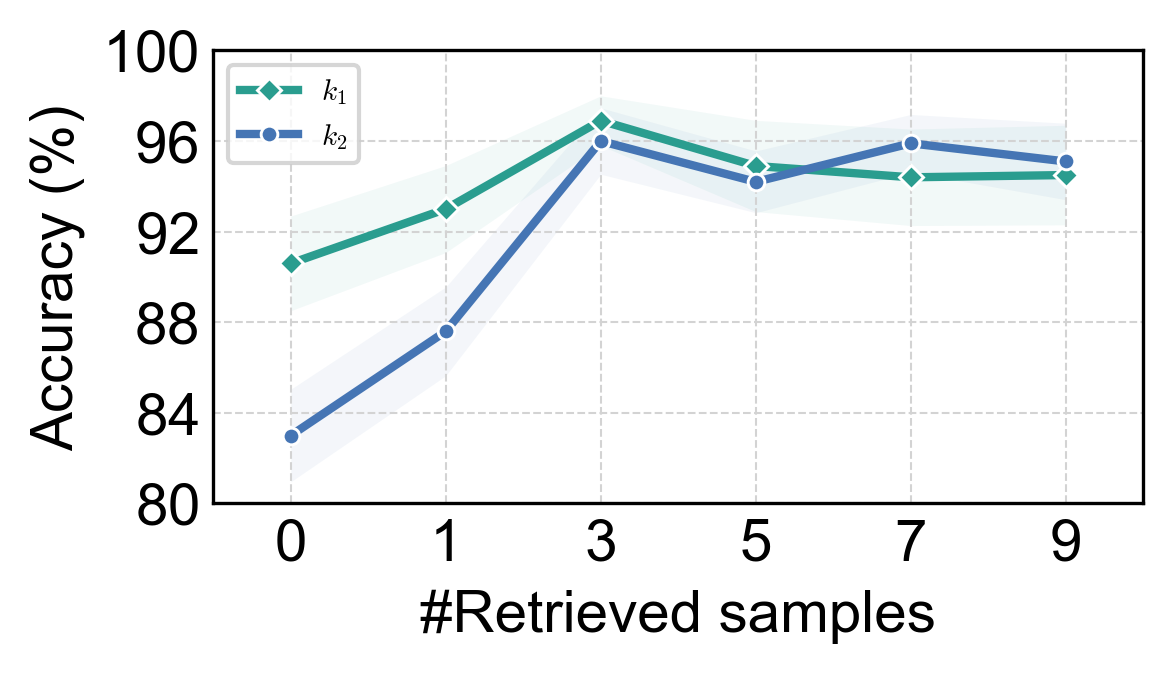}
  }
  \subfloat[Washington]{
    \includegraphics[width=0.23\textwidth]{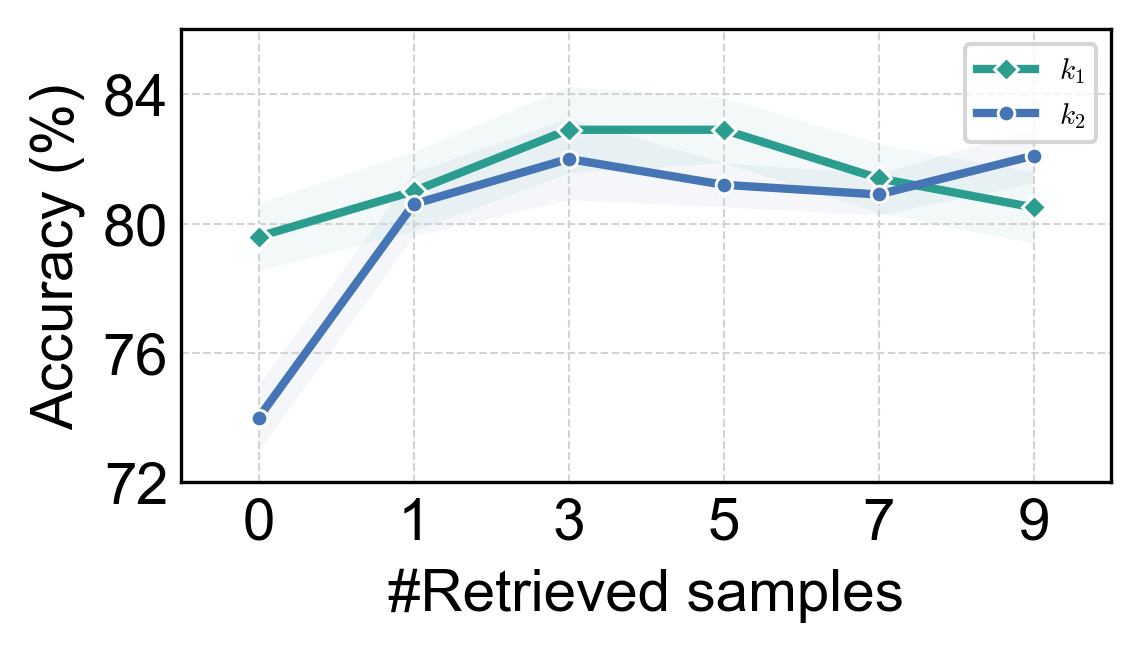}
  }

  \caption{Ablation results on the number of retrieved samples in text embedding ($k_1$) and message aggregation ($k_2$).}
  \label{fig:k}
\end{figure*}

\begin{table*}[h]
  \centering
  \caption{Comparisons between \ours and its variants. 
  L-I: label-aware contextual embedding; 
  L-II: label-aware aggregation.}
  \label{tab:abla}
  \small
  {
    \begin{tabular}{l|cccccccc}
      \Xhline{1.2pt}
      \rowcolor{tableheadcolor}
      & \textbf{Children} & \textbf{History} & \textbf{Photo}  
      & \textbf{Computers} & \textbf{Cornell} & \textbf{Texas} 
      & \textbf{Wisc.} & \textbf{Wash.} \\
      \Xhline{1.0pt}

      \ours$\backslash$ (L-I)     
      & 56.6$_{\pm0.3}$ & 85.4$_{\pm0.9}$ & 86.2$_{\pm0.8}$ & 86.9$_{\pm0.6}$    
      & 85.4$_{\pm0.4}$ & 95.1$_{\pm0.7}$ & 90.2$_{\pm0.6}$ & 80.7$_{\pm0.9}$ \\

      \ours$\backslash$ (L-II)    
      & 58.3$_{\pm0.6}$ & 86.6$_{\pm0.7}$ & 85.4$_{\pm0.5}$ & 88.8$_{\pm0.3}$ 
      & 79.4$_{\pm1.1}$ & 86.1$_{\pm1.5}$ & 83.3$_{\pm1.6}$ & 74.6$_{\pm1.4}$ \\

      \ours$\backslash$ (L-I\&II) 
      & 55.9$_{\pm0.7}$ & 84.7$_{\pm0.4}$ & 83.2$_{\pm0.3}$ & 85.4$_{\pm0.4}$  
      & 76.6$_{\pm1.4}$ & 68.6$_{\pm0.9}$ & 80.6$_{\pm0.8}$ & 69.6$_{\pm1.2}$ \\
      \hline
\ours$\backslash$ (semantic retrieval)
& 57.8$_{\pm0.7}$ & 84.9$_{\pm0.8}$ & 84.6$_{\pm0.7}$ & 87.1$_{\pm0.6}$
& 78.8$_{\pm1.3}$ & 84.7$_{\pm1.6}$ & 82.1$_{\pm1.5}$ & 73.8$_{\pm1.4}$ \\
\ours$\backslash$ (structural retrieval)
& 59.9$_{\pm0.6}$ & 85.9$_{\pm0.6}$ & 87.1$_{\pm0.6}$ & 89.8$_{\pm0.5}$
& 85.6$_{\pm0.9}$ & 93.2$_{\pm1.0}$ & 91.5$_{\pm1.1}$ & 79.8$_{\pm1.0}$ \\ 
      \hline
      \ours
      & \first{61.9$_{\pm0.7}$} & \first{87.0$_{\pm0.5}$} 
      & \first{88.7$_{\pm0.4}$} & \first{91.2$_{\pm0.3}$}
      & \first{89.9$_{\pm0.2}$} & \first{97.4$_{\pm0.4}$}
      & \first{96.3$_{\pm0.3}$} & \first{82.6$_{\pm0.2}$} \\
      \Xhline{1.2pt}
    \end{tabular}
  }
\end{table*}

\begin{table*}[t]
  \centering
    \caption{Retriever backbone ablation for \ours on eight node classification benchmarks. }\label{tab:retriever_ablation}

  \small
  {
    \begin{tabular}{lcccccccc}
      \Xhline{1.2pt}
      \rowcolor{tableheadcolor} \textbf{Retriever backbone}              & \textbf{Children}        & \textbf{History}         & \textbf{Photo}           & \textbf{Computers} & \textbf{Cornell}        & \textbf{Texas}          & \textbf{Wisc.}          & \textbf{Wash.}          \\
      \Xhline{1.0pt}
      \texttt{bert-base-uncased}      & 60.1$_{\pm0.4}$          & 85.7$_{\pm0.8}$          & 87.6$_{\pm0.6}$          & 90.3$_{\pm0.2}$       & 81.4$_{\pm1.6}$  & 80.4$_{\pm0.6}$ & 77.8$_{\pm2.0}$ & 68.6$_{\pm1.6}$    \\
      \texttt{all-mpnet-base-v2}      & 60.6$_{\pm0.3}$          & 86.2$_{\pm0.3}$          & 88.2$_{\pm0.3}$          & 90.8$_{\pm0.3}$   & 82.6$_{\pm1.2}$  & 84.9$_{\pm0.8}$ & 79.1$_{\pm1.4}$ & 69.6$_{\pm1.5}$       \\
      \texttt{bge-en-icl} & \first{61.9$_{\pm0.7}$}  & \first{87.0$_{\pm0.5}$}  & \first{88.7$_{\pm0.4}$} & \first{91.2$_{\pm0.3}$} & \first{89.9$_{\pm0.2}$} & \first{97.4$_{\pm0.4}$} & \first{96.3$_{\pm0.3}$} & \first{82.6$_{\pm0.2}$}\\
      \Xhline{1.2pt}
    \end{tabular}
  }
\end{table*}

We provide a comparison of training time between \ours and standard GNNs, including GCN, GAT, GraphSAGE, and DGT on the Fitness dataset - the largest dataset our experiments. The results in Figure~\ref{fig:eff} highlight that \ours achieves significantly higher computational efficiency while maintaining strong performance. Unlike traditional GNNs that require iterative message passing across multiple layers, \ours efficiently retrieves and incorporates contextual information through retrieval-augmented mechanisms, reducing redundant computations. Additionally, \ours avoids costly graph convolutions and neighbor aggregation steps, leading to faster convergence and lower training time. The efficiency of \ours is particularly advantageous for large-scale graph datasets, where standard GNNs often suffer from scalability issues due to high memory consumption and increased computational complexity. These results demonstrate that \ours provides a practical and scalable alternative to existing GNN models, offering a balance between computational efficiency and predictive accuracy.

\subsection{Ablation study}
In this subsection, we conduct ablation studies on eight representative benchmarks, including homophilic graph datasets Children, History, Photo, and Computers, and heterophilic graph datasets Cornell, Texas, Wisconsin, and Washington, to investigate the key factors that influence the performance of \ours.

\subsubsection{Impact of retrieved samples.}
We first conduct an ablation study on the number of retrieved text samples ($k_1$) and the number of neighbors used for message aggregation ($k_2$). The results are shown in Figure~\ref{fig:k}. Both $k_1$ and $k_2$ are important for the downstream performance of \ours. When $k_1=0$ or $k_2=0$, the method degenerates into a plain text-embedding baseline or a simple MLP without retrieved context or neighborhood aggregation.
Notably, relatively small values of $k_1$ and $k_2$ already yield strong performance, suggesting that a modest amount of high-relevance context is sufficient and helps avoid overwhelming the model. In contrast, increasing $k_1$ and $k_2$ beyond a certain point consistently hurts accuracy, likely because additional retrieved samples or neighbors become increasingly irrelevant or redundant and introduce noise. This degradation is more pronounced on heterophilic datasets, where indiscriminate neighborhood aggregation is more likely to mix semantically mismatched information across classes.
Overall, these results reveal a clear trade-off in which appropriately chosen $k_1$ and $k_2$ improve performance by injecting useful contextual signals.

\subsubsection{Ablation on label-aware and retrieval components.}
\ours couples retrieval with lightweight aggregation and incorporates label information into both the contextual embedding stage (L-I) and the aggregation stage (L-II).
We evaluate variants by removing L-I, L-II, or both components, with results reported in Table~\ref{tab:abla}.
Removing either component consistently degrades performance, confirming the benefit of label supervision at different stages of \ours.
In particular, removing L-I weakens the label-informed contextual representations used for subsequent retrieval, whereas removing L-II leads to more pronounced degradation on heterophilic graphs, where retrieved contexts are more likely to contain class-inconsistent information and label-aware aggregation helps suppress such noise.
Removing both components results in the largest overall performance drop, indicating that label guidance in contextual representation learning and retrieval aggregation provides complementary benefits.
We further examine the contribution of the two retrieval signals by individually removing semantic or structural retrieval.
Both variants perform consistently worse than the full model, demonstrating that semantic and structural evidence provide complementary information for context construction.
Semantic retrieval generally contributes more substantially, particularly on heterophilic graphs, where graph topology alone may provide less reliable task-relevant context.
Nevertheless, removing structural retrieval also leads to clear performance degradation, showing that PPR-based structural proximity captures useful high-order topological dependencies that are not fully reflected by embedding similarity.
Overall, jointly exploiting label-aware modeling together with semantic and structural retrieval enables \ours to construct more informative and task-relevant context sets.

\subsubsection{Ablation on the retrieval embedding model.}
In our main experiments, we follow prior work~\cite{bundle,wang2025generalization} and adopt \texttt{bge-en-icl}~\cite{li2024makingtextembeddersfewshot} as the default retrieval embedding model used to encode texts for dense retrieval.
To study the effect of embedding quality, we keep the retrieval pipeline fixed and only swap the text encoder among \texttt{bert-base-uncased}~\cite{bert}, \texttt{all-mpnet-base-v2}~\cite{sentence-bert}, and \texttt{bge-en-icl}.
As shown in Table~\ref{tab:retriever_ablation}, \texttt{bge-en-icl} achieves the best performance across all eight benchmarks.
The improvements are relatively modest on homophilic datasets, where neighborhood signals are already informative and retrieved texts mainly provide complementary evidence.
In contrast, on heterophilic datasets, \texttt{bge-en-icl} yields substantially larger gains, which suggests that stronger semantic embeddings are crucial when informative neighborhoods are harder to recover and naive aggregation is more likely to introduce mismatched context.
From a practical perspective, \texttt{all-mpnet-base-v2} offers a favorable accuracy--efficiency trade-off, while \texttt{bge-en-icl} is preferred when the additional compute is justified by the performance gains in challenging, especially heterophilic settings.

\section{Conclusion}
In this work, we establish a close connection between message-passing GNNs and retrieval-augmented generation.
We show that RAG-inspired designs can serve as a simplified yet effective alternative to conventional GNN architectures for text-attributed graphs.
By casting graph representation learning as a retrieval-augmented problem, \ours queries and aggregates task-relevant context from text-attributed nodes, which reduces the reliance on explicit message passing.
We further draw inspiration from label propagation and use training labels as guidance signals that shape both retrieval embeddings and neighborhood aggregation.
Theoretically, we formalize the relationship between RAG-based aggregation and standard message-passing GNNs, and we provide robustness guarantees for the optimization of label-aware retrieval.
Empirically, \ours achieves state-of-the-art results on a range of text-attributed graph benchmarks compared with strong GNN, LLM, and graph-LLM baselines, while demonstrating improved scalability and robustness under adversarial perturbations and mitigating oversmoothing.

\section*{Acknowledgments}
This work was supported by the New Generation Artificial Intelligence-National Science and Technology Major Project (No. 2025ZD0122701), the Young Scientists Fund (C) of the National Natural Science Foundation of China (No. 62602538), and the National Science Fund for Distinguished Young Scholars (No. 62525605).

\bibliography{reference}
\bibliographystyle{IEEEtran}

\appendix
\section{Theory and Proofs}
\label{sec:theory_and_proof}

This section provides theoretical support for the retrieval-augmented view developed in
Section~4. Rather than treating message passing as a graph-specific primitive,
we analyze it as \emph{prediction over a retrieved context set}. Concretely, for a query node $u$ we denote by
$\mathcal{R}_u^{(k)}$ its retrieved set (constructed by score-based Top-$k$ retrieval), and we view the downstream
computation as a permutation-invariant summarization of $\mathcal{R}_u^{(k)}$ followed by a parametric predictor.

We further consider the label-aware setting where a subset of nodes (e.g., the training set) can provide supervision as
retrieved \emph{label evidence}. Throughout, let $\mathcal{T}\subseteq \mathcal{V}$ denote the set of labeled nodes: for each $v\in\mathcal{T}$ we observe a (possibly noisy) label $\tilde y_v\in[C]$, while nodes outside $\mathcal{T}$ contribute only a fixed padding embedding (taken as $\mathbf{0}$ for simplicity). Under this notation, we establish two complementary results that align directly with the design of \ours: (i) an \emph{approximation} showing that Top-$k$ retrieval with mean aggregation provides a sparse approximation to softmax-attention message passing under a margin condition; and (ii) an \emph{optimization robustness} guarantee showing that retrieved-context supervision attenuates the gradient impact of mis-retrieved outliers.
Together, these statements clarify when retrieval-defined context sets can replace traditional message passing on fixed edges, and why injecting label evidence into retrieval and aggregation can improve stability without sacrificing robustness to residual retrieval noise.

\begin{assumption}[Bounded retrieved features]\label{ass:bounded}
Define normalized embeddings $\bar{\mathbf{x}}_u = \mathbf{x}_u/\|\mathbf{x}_u\|_2$, such that $\|\bar{\mathbf{x}}_u\|_2=1$.
Let $\omega:[C]\to\mathbb{R}^{d}$ be the label embedding map and define the label embedding used in aggregation as
\[
\mathbf{z}_v \;=\; \omega(\tilde y_v)\ \ \text{if } v\in \mathcal{T},\qquad \mathbf{z}_v=\mathbf{0}\ \ \text{otherwise}.
\]
Let the label-augmented neighbor feature be
\[
\mathbf{r}_v \;:=\; \mathbf{x}_v + \mathbf{z}_v \in \mathbb{R}^{d}.
\]
Assume coordinate-wise boundedness: for all $v$ and coordinates $j$,
\[
|(\mathbf{r}_v)_j|\le B.
\]
\end{assumption}

\begin{assumption}[Top-$k$ retrieval margin]\label{ass:margin}
Fix a node $u$. Let $s(u,v)$ be the retrieval score used in Eq.~(9) to construct $\mathcal{R}_u^{(k)}$.
In \ours, $s(u,v)=S(u,v)$ denotes the generalized retrieval score that combines semantic and structural relevance.
Let $v_{(1)},v_{(2)},\dots$ be nodes sorted by $s(u,\cdot)$ in descending order and define the top-$k$ margin
\[
\gamma_u \;=\; s\big(u,v_{(k)}\big) - s\big(u,v_{(k+1)}\big).
\]
Assume $\gamma_u>0$ for the nodes of interest.
\end{assumption}

\begin{theorem}[Formal statement of Theorem~1]
\label{thm:topk_as_attention}
Fix a node $u$ and let $s(u,v)$ be the retrieval score used in Eq.~(9).
Let $\mathcal{R}_u^{(k)}=\{v_{(1)},\ldots,v_{(k)}\}$ be the Top-$k$ set under $s(u,\cdot)$.
For any temperature $\tau>0$, define softmax attention weights over all candidates $v\neq u$ by
\[
a_{uv}(\tau)\;:=\;\frac{\exp\big(s(u,v)/\tau\big)}{\sum_{w\neq u}\exp\big(s(u,w)/\tau\big)}.
\]
Define the attention message and the Top-$k$ mean message as
\[
\mathbf{m}_u^{\mathrm{att}}(\tau)\;:=\;\sum_{v\neq u} a_{uv}(\tau)\,\mathbf{r}_v,
\qquad
\mathbf{m}_u^{\mathrm{mean}}\;:=\;\frac{1}{k}\sum_{v\in \mathcal{R}_u^{(k)}} \mathbf{r}_v,
\]
where $\mathbf{r}_v=\mathbf{x}_v+\mathbf{z}_v$ as in Assumption~\ref{ass:bounded}.
Let $\pi_u(\tau):=\sum_{v\notin \mathcal{R}_u^{(k)}} a_{uv}(\tau)$ be the attention mass outside the retrieved set, and let
\[
\delta_u \;:=\; s\big(u,v_{(1)}\big)-s\big(u,v_{(k)}\big)\ \ge 0
\]
be the score spread within the retrieved set.
Under Assumption~\ref{ass:bounded}, the following bound holds:
\[
\big\| \mathbf{m}_u^{\mathrm{att}}(\tau) - \mathbf{m}_u^{\mathrm{mean}}\big\|_\infty
\;\le\;
2B\,\pi_u(\tau)\;+\;B\big(\exp(\delta_u/\tau)-1\big).
\]
Moreover, under Assumption~\ref{ass:margin},
\[
\pi_u(\tau)\;\le\;(N-1-k)\exp\!\big(-\gamma_u/\tau\big),
\]
where $N=|\mathcal{V}|$.
\end{theorem}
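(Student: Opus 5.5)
The plan is to split the difference $\mathbf{m}_u^{\mathrm{att}}(\tau)-\mathbf{m}_u^{\mathrm{mean}}$ into an \emph{outside-mass} part and an \emph{inside-reweighting} part. Each part is then bounded coordinate-wise using Assumption~\ref{ass:bounded}.

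Write $A:=\sum_{v\in\mathcal{R}_u^{(k)}}a_{uv}(\tau)=1-\pi_u(\tau)$ for the attention mass on the retrieved set. Define the conditional weights $q_v:=a_{uv}(\tau)/A$ for $v\in\mathcal{R}_u^{(k)}$. These are exactly the softmax of $s(u,\cdot)/\tau$ restricted to $\mathcal{R}_u^{(k)}$. For $v\in\mathcal{R}_u^{(k)}$ we use the decomposition
\[
a_{uv}(\tau)-\tfrac1k \;=\; A\big(q_v-\tfrac1k\big)\;+\;\tfrac{A-1}{k}.
\]
This gives
\[
\mathbf{m}_u^{\mathrm{att}}-\mathbf{m}_u^{\mathrm{mean}}
=\sum_{v\notin\mathcal{R}_u^{(k)}}a_{uv}\mathbf{r}_v
\;-\;\frac{\pi_u}{k}\sum_{v\in\mathcal{R}_u^{(k)}}\mathbf{r}_v
\;+\;A\sum_{v\in\mathcal{R}_u^{(k)}}\big(q_v-\tfrac1k\big)\mathbf{r}_v .
\]
Taking a fixed coordinate $j$ and using $|(\mathbf{r}_v)_j|\le B$, the first term is at most $B\pi_u(\tau)$. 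The second term is also at most $B\pi_u(\tau)$, since it averages $k$ vectors with weight $\pi_u/k$ each. Together these produce the $2B\pi_u(\tau)$ term. The third term is at most $AB\sum_{v\in\mathcal{R}_u^{(k)}}|q_v-1/k|\le B\sum_{v}|q_v-1/k|$.

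The key step, and the one I expect to need the most care, is bounding $\sum_{v\in\mathcal{R}_u^{(k)}}|q_v-\tfrac1k|$ by $\exp(\delta_u/\tau)-1$. All scores in the retrieved set lie in $[s(u,v_{(k)}),s(u,v_{(1)})]$, so any two conditional weights satisfy $q_v/q_w\le\exp(\delta_u/\tau)$. Since the $q_v$ sum to one, the largest is at least $1/k$ and the smallest is at most $1/k$. Hence every $q_v$ lies in $[e^{-\delta_u/\tau}/k,\;e^{\delta_u/\tau}/k]$. It follows that $|q_v-1/k|\le(e^{\delta_u/\tau}-1)/k$, using $1-e^{-x}\le e^x-1$. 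Summing over the $k$ retrieved nodes gives the claimed $B(\exp(\delta_u/\tau)-1)$ term and completes the first inequality.

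For the margin bound, every $v\notin\mathcal{R}_u^{(k)}$ with $v\neq u$ has $s(u,v)\le s(u,v_{(k+1)})=s(u,v_{(k)})-\gamma_u$. The normalizer satisfies $\sum_{w\neq u}\exp(s(u,w)/\tau)\ge\exp(s(u,v_{(k)})/\tau)$. Hence each outside weight is at most $\exp(-\gamma_u/\tau)$. There are $N-1-k$ such candidates, since $u$ itself is excluded and $k$ are retrieved, which yields $\pi_u(\tau)\le(N-1-k)\exp(-\gamma_u/\tau)$.

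Substituting this into the first bound gives the informal $O(\exp(-\gamma_u/\tau))$ rate of Theorem~\ref{thm:topk_as_attention_short}. This holds whenever the within-set spread term is also controlled, for instance when $\delta_u/\tau$ is small.
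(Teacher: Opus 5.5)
Your proof is correct and follows essentially the same route as the paper: a $2B\pi_u(\tau)$ leakage term plus a within-set reweighting term controlled by the multiplicative spread $\exp(\delta_u/\tau)$, together with the same margin argument bounding each outside weight by $\exp(-\gamma_u/\tau)$. The differences are cosmetic. You split the leakage against the uniform mean, carrying a factor $A=1-\pi_u(\tau)\le 1$ on the reweighting term, whereas the paper passes through the renormalized truncated message $\tilde{\mathbf{m}}_u(\tau)$; you also spell out the extremal-weight argument and the inequality $1-e^{-x}\le e^{x}-1$, which the paper leaves implicit.
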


\begin{proof}
Let $S=\mathcal{R}_u^{(k)}$. Define the renormalized weights on $S$ by
\[
\tilde a_{uv}(\tau)=\frac{a_{uv}(\tau)}{\sum_{w\in S}a_{uw}(\tau)}=\frac{a_{uv}(\tau)}{1-\pi_u(\tau)}\qquad (v\in S),
\]
and the truncated attention message $\tilde{\mathbf{m}}_u(\tau)=\sum_{v\in S}\tilde a_{uv}(\tau)\,\mathbf{r}_v$.

First, bound the leakage error:
\[
\mathbf{m}_u^{\mathrm{att}}(\tau)-\tilde{\mathbf{m}}_u(\tau)
=
\sum_{v\notin S} a_{uv}(\tau)\,\mathbf{r}_v
+
\sum_{v\in S}\Big(a_{uv}(\tau)-\tilde a_{uv}(\tau)\Big)\mathbf{r}_v.
\]
Since $\sum_{v\in S}|a_{uv}(\tau)-\tilde a_{uv}(\tau)|=\pi_u(\tau)$ and $\|\mathbf{r}_v\|_\infty\le B$ by Assumption~\ref{ass:bounded}, we obtain
\[
\|\mathbf{m}_u^{\mathrm{att}}(\tau)-\tilde{\mathbf{m}}_u(\tau)\|_\infty
\le
B\pi_u(\tau)+B\pi_u(\tau)=2B\pi_u(\tau).
\]

Second, bound the within-$S$ weighting error. By triangle inequality and $\|\mathbf{r}_v\|_\infty\le B$,
\[
\|\tilde{\mathbf{m}}_u(\tau)-\mathbf{m}_u^{\mathrm{mean}}\|_\infty
=
\Big\|\sum_{v\in S}\big(\tilde a_{uv}(\tau)-\tfrac{1}{k}\big)\mathbf{r}_v\Big\|_\infty
\le
B\sum_{v\in S}\Big|\tilde a_{uv}(\tau)-\tfrac{1}{k}\Big|.
\]
By definition of $\delta_u$, for all $v\in S$ we have
$s(u,v)\in[s(u,v_{(k)}),\,s(u,v_{(k)})+\delta_u]$.
Thus $\exp(s(u,v)/\tau)$ varies within a multiplicative factor $\exp(\delta_u/\tau)$ over $v\in S$, which implies
\[
\frac{1}{k\,\exp(\delta_u/\tau)}\;\le\;\tilde a_{uv}(\tau)\;\le\;\frac{\exp(\delta_u/\tau)}{k}\qquad \forall v\in S.
\]
Hence, $\big|\tilde a_{uv}(\tau)-\tfrac{1}{k}\big|\le \tfrac{\exp(\delta_u/\tau)-1}{k}$, and summing over $v\in S$ yields
\[
\sum_{v\in S}\Big|\tilde a_{uv}(\tau)-\tfrac{1}{k}\Big|
\le
\exp(\delta_u/\tau)-1.
\]
Therefore,
\[
\|\tilde{\mathbf{m}}_u(\tau)-\mathbf{m}_u^{\mathrm{mean}}\|_\infty
\le
B\big(\exp(\delta_u/\tau)-1\big).
\]
Combining the two bounds gives
\[
\|\mathbf{m}_u^{\mathrm{att}}(\tau)-\mathbf{m}_u^{\mathrm{mean}}\|_\infty
\le
2B\pi_u(\tau)+B\big(\exp(\delta_u/\tau)-1\big).
\]

Finally, bound $\pi_u(\tau)$ under Assumption~\ref{ass:margin}. For any $v\notin S$,
$s(u,v)\le s(u,v_{(k+1)})\le s(u,v_{(k)})-\gamma_u$.
Therefore,
\[
\sum_{v\notin S}\exp(s(u,v)/\tau)
\le
(N-1-k)\exp\!\big((s(u,v_{(k)})-\gamma_u)/\tau\big),
\]
while
\[
\sum_{v\in S}\exp(s(u,v)/\tau)\ge \exp\big(s(u,v_{(k)})/\tau\big).
\]
Thus,
\begin{align*}
\pi_u(\tau)
=
\frac{\sum_{v\notin S}\exp(s(u,v)/\tau)}{\sum_{w\neq u}\exp(s(u,w)/\tau)}
&\le
\frac{\sum_{v\notin S}\exp(s(u,v)/\tau)}{\sum_{v\in S}\exp(s(u,v)/\tau)}\\
&\le
(N-1-k)\exp(-\gamma_u/\tau).
\end{align*}
\end{proof}

\begin{remark}
Theorem~\ref{thm:topk_as_attention} justifies Top-$k$ retrieval as a principled sparsification of attention-based message passing. 
The theorem is stated for softmax-attention message passing because it provides a convenient envelope for many classical GNN layers. 
If the attention scores are constant on the context set, softmax attention reduces to uniform averaging, which matches the mean aggregator used in models such as GCN and GraphSAGE. 
Sum aggregation differs only by a degree-dependent scaling and can be viewed as the same update up to normalization. 
Therefore, the message-level conclusion remains informative for a broad class of message-passing GNNs, even when the final layer implementation does not explicitly use attention.
\end{remark}

\begin{remark}
\label{rem:margin_spread_mean}
Theorem~\ref{thm:topk_as_attention} separates two sources of approximation error between
$m_u^{\mathrm{att}}(\tau)$ and the Top-$k$ mean message $m_u^{\mathrm{mean}}$.
The first term $2B\,\pi_u(\tau)$ captures \emph{retrieval leakage}: even if the Top-$k$ set is fixed, attention still assigns
some probability mass to $v\notin\mathcal{R}_u^{(k)}$, which vanishes when the margin $\gamma_u$ is large (stable retrieval).
The second term $B(\exp(\delta_u/\tau)-1)$ captures \emph{within-set reweighting}: attention is generally non-uniform on
$\mathcal{R}_u^{(k)}$, and it approaches uniform (hence mean aggregation) when the score spread
$\delta_u = s(u,v_{(1)})-s(u,v_{(k)})$ is small, i.e., the retrieved items are similarly relevant.
This decomposition clarifies why a retrieval-based graph learner can replace attention-style message passing:
good retrieval quality corresponds to simultaneously (i) suppressing leakage via a clear margin and (ii) keeping the retrieved
set ``coherent'' so that a simple permutation-invariant mean operator is an accurate surrogate for attention.
\end{remark}

\begin{theorem}[Restatement of Theorem~2]\label{thm:tolerance}
Let $\mathcal{R}_u$ be a retrieved set for some query node $u$, with prediction distribution
$p(\mathcal{R}_u)=(p_1,\ldots,p_C)$ induced by retrieved-context aggregation.
Consider an outlier node $o\in\mathcal{R}_u$ whose individual distribution is $p_o=(p_1',\ldots,p_C')$ and define
$m'=\arg\max_{c\in[C]} p_c'$.
Let $y_u$ be the ground-truth label of node $u$, treated as fixed during differentiation.
If $y_u\neq m'$ and $p_{m'}'\ge p_{m'}$, then
\[
0 \leq \frac{\partial \mathcal{L}(\mathcal{R}_u)}{\partial \ell'_{m'}} \leq
\frac{\partial \mathcal{L}(o)}{\partial \ell'_{m'}},
\]
where $\mathcal{L}(\mathcal{R}_u)=\mathrm{CE}(p(\mathcal{R}_u),y_u)$ and
$\mathcal{L}(o)=\frac{1}{|\mathcal{R}_u|}\mathrm{CE}(p_o,y_u)$, and $\ell'_{m'}$ denotes the outlier logit for class $m'$.
\end{theorem}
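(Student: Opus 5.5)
The plan is to fix a concrete model of ``retrieved-context aggregation'' that matches \ours's mean aggregator, and then reduce the inequality to a direct comparison of two softmax gradients. I will model the aggregation at the logit level. Each retrieved node $v\in\mathcal{R}_u$ carries a logit vector $\boldsymbol{\ell}_v$; the outlier's logits are $\boldsymbol{\ell}'=\boldsymbol{\ell}_o$, so that $p_o=\mathrm{softmax}(\boldsymbol{\ell}')$. The aggregated prediction is $p(\mathcal{R}_u)=\mathrm{softmax}(\bar{\boldsymbol{\ell}})$ with $\bar{\boldsymbol{\ell}}=\frac{1}{|\mathcal{R}_u|}\sum_{v\in\mathcal{R}_u}\boldsymbol{\ell}_v$. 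This is consistent with the mean message $\mathbf{m}_u^{\mathrm{mean}}$ of Theorem~\ref{thm:topk_as_attention} when the readout is linear. It is also the reading under which the hypothesis $p'_{m'}\ge p_{m'}$ is exactly the right condition, as the computation below shows.

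\textbf{Step 1: gradient of the aggregated loss.} I will use the standard identity $\partial\,\mathrm{CE}(\mathrm{softmax}(\boldsymbol{\ell}),y)/\partial \ell_c = p_c-\mathbb{I}[c=y]$. By the chain rule through $\bar{\boldsymbol{\ell}}$, we have $\partial\bar\ell_{m'}/\partial\ell'_{m'}=1/|\mathcal{R}_u|$, and no other coordinate of $\bar{\boldsymbol{\ell}}$ depends on $\ell'_{m'}$. Since $y_u\neq m'$ and $y_u$ is held fixed, this gives
\[
\frac{\partial\mathcal{L}(\mathcal{R}_u)}{\partial\ell'_{m'}}=\frac{p_{m'}}{|\mathcal{R}_u|}.
\]

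\textbf{Step 2: gradient of the individual loss.} The same identity applied to $\mathcal{L}(o)=\frac{1}{|\mathcal{R}_u|}\mathrm{CE}(p_o,y_u)$, again with $m'\neq y_u$, gives
\[
\frac{\partial\mathcal{L}(o)}{\partial\ell'_{m'}}=\frac{p'_{m'}}{|\mathcal{R}_u|}.
\]

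\textbf{Step 3: comparison.} Nonnegativity of the first gradient follows because $p_{m'}\ge 0$. The upper bound is then exactly the hypothesis $p'_{m'}\ge p_{m'}$, after dividing both sides by $|\mathcal{R}_u|$.

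The main obstacle is not the calculus but pinning down the aggregation model. If $p(\mathcal{R}_u)$ were instead a mixture of per-node softmax distributions, the gradient would become $\frac{1}{|\mathcal{R}_u|}\,p'_{y_u}p'_{m'}/p_{y_u}$. The required inequality would then need $p'_{y_u}\le p_{y_u}$, which does not follow from $p'_{m'}\ge p_{m'}$. I will therefore state the logit-averaging interpretation explicitly at the start of the proof and point out that the hypothesis in Theorem~\ref{thm:tolerance} is tailored to it.
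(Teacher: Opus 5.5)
Your proposal is correct and takes essentially the same route as the paper: it also defines $p(\mathcal{R}_u)$ as the softmax of mean-pooled per-node logits, uses the chain rule to get $\frac{1}{|\mathcal{R}_u|}(p_c-e_c)$ and $\frac{1}{|\mathcal{R}_u|}(p'_c-e_c)$, and concludes by evaluating at $c=m'$ with $e_{m'}=0$. Your side remark is also accurate: under a mixture-of-softmaxes aggregation the bound would instead need $p'_{y_u}\le p_{y_u}$, which is a useful reason to state the logit-averaging model explicitly.
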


\begin{proof}
Since $y_u$ is the ground-truth label of the query node, it is fixed with respect to model parameters during differentiation.

Let $\mathbf{e}\in\{0,1\}^C$ be the one-hot vector of $y_u$ (i.e., $e_{y_u}=1$).
For each node $i\in\mathcal{R}_u$, let $\boldsymbol{\ell}_i=(\ell_{i,1},\ldots,\ell_{i,C})$ be its logits. Define the set-level logits by mean pooling:
\[
\ell_c=\frac{1}{|\mathcal{R}_u|}\sum_{i\in\mathcal{R}_u} \ell_{i,c},\qquad c=1,\ldots,C,
\]
and set $p(\mathcal{R}_u)=p=\mathrm{softmax}(\boldsymbol{\ell})$. For the outlier node $o$, denote its logits by
$\boldsymbol{\ell}_o=\boldsymbol{\ell}'=(\ell_1',\ldots,\ell_C')$ and its distribution by $p_o=p'=\mathrm{softmax}(\boldsymbol{\ell}')$.

Using $\frac{\partial \mathcal{L}(\mathcal{R}_u)}{\partial \ell_c}=p_c-e_c$ and $\frac{\partial \ell_c}{\partial \ell'_c}=\frac{1}{|\mathcal{R}_u|}$ for the outlier node,
the chain rule yields
\[
\frac{\partial \mathcal{L}(\mathcal{R}_u)}{\partial \ell'_c}
=
\frac{1}{|\mathcal{R}_u|}(p_c-e_c).
\]
Likewise, since $\nabla_{\boldsymbol{\ell}'}\,\mathrm{CE}(\mathrm{softmax}(\boldsymbol{\ell}'),y_u)=p'-\mathbf{e}$, we obtain
\[
\frac{\partial \mathcal{L}(o)}{\partial \ell'_c}
=
\frac{1}{|\mathcal{R}_u|}(p'_c-e_c).
\]

Evaluating at $c=m'$ and using $y_u\neq m'$ (hence $e_{m'}=0$), we have
\[
\frac{\partial \mathcal{L}(\mathcal{R}_u)}{\partial \ell'_{m'}}=\frac{1}{|\mathcal{R}_u|}p_{m'}\ge 0,
\qquad
\frac{\partial \mathcal{L}(o)}{\partial \ell'_{m'}}=\frac{1}{|\mathcal{R}_u|}p'_{m'}.
\]
Together with $p'_{m'}\ge p_{m'}$, this implies
\[
0\le \frac{\partial \mathcal{L}(\mathcal{R}_u)}{\partial \ell'_{m'}}
\le \frac{\partial \mathcal{L}(o)}{\partial \ell'_{m'}}.
\]
\end{proof}

\begin{remark}
\label{rem:tolerance}
Theorem~\ref{thm:tolerance} shows that when retrieval introduces an outlier node $o$ whose most confident class $m'$ conflicts with the ground-truth label $y_u$ and satisfies $p'_{m'}\ge p_{m'}$, the retrieved-context objective $\mathcal{L}(\mathcal{R}_u)$ is more tolerant compared to the individual supervision $\mathcal{L}(o)$, as evidenced by a smaller penalty imposed by the gradient.
\end{remark}

\begin{remark}
\label{rem:label_rag}
Theorem~\ref{thm:tolerance} further indicates that even under imperfect retrieval, outlier neighbors whose predictions conflict with the ground-truth
label have their optimization impact attenuated under retrieved-context supervision. Consequently, injecting label information
can improve semantic coherence and retrieval precision, while the retrieved-context objective maintains robustness to the remaining retrieval noise.
Together, these effects help explain why combining retrieval with label-aware aggregation can simultaneously enhance optimization stability and
downstream performance in graph representation learning.
\end{remark}

\end{document}